\newcommand{\firstpage}{
\renewcommand{\Authfont}{\normalsize}

\title{\large\bfseries Regular expressions for decoding of neural network outputs}
\date{\small \today}
\author[1]{Tobias Strau\ss}
\author[1]{Gundram Leifert}
\author[1]{Tobias Gr\"uning}
\author[1]{Roger Labahn}
\affil[1]{Department of Mathematics, University of Rostock, Rostock, Germany}
\maketitle
\begin{abstract}
This article proposes a convenient tool for decoding the output of neural networks trained by Connectionist Temporal Classification (CTC) for handwritten text recognition. We use regular expressions to describe the complex structures expected in the writing. The corresponding finite automata are employed to build a decoder. We analyze theoretically which calculations are relevant and which can be avoided. A great speed-up results from an approximation. {We conclude that the approximation most likely fails if the regular expression does not match the ground truth which is not harmful for many applications since the low probability will be even underestimated}. The proposed decoder is very efficient compared to other decoding methods. The variety of applications reaches from information retrieval to full text recognition. We refer to applications where we integrated the proposed decoder successfully.
\end{abstract}
}
\newcommand{\IN}{\mathbb{N}}
\newcommand{\raum}[1]{\mathbb{#1}}
\newcommand{\fett}[1]{\pmb{ {#1}}}
\newcommand{\albet}{\Sigma}
\newcommand{\dict}{\mathcal{V}}
\newcommand{\nac}{\emph{NaC}}
\newcommand{\symbnac}{\star} %\text{\ding{110}}
\newcommand{\likeliest}{\gamma}
\newcommand{\emission}{\zeta}
\DeclareMathOperator*{\argmax}{arg max}
\DeclareMathOperator{\app}{app}
\DeclareMathOperator{\cont}{cont}
\DeclareMathOperator{\p}{P}
\theoremstyle{definition}
\newtheorem{theo}{Theorem}
\newtheorem{ex}[theo]{Example}
\newtheorem{defi}[theo]{Definition}
\newtheorem{rem}[theo]{Remark}
\begin{document}

\firstpage

\section{Introduction}
{Sequence labeling is the task of assigning a (class) label to each position of an incoming sequence such as speech or handwriting recognition. These tasks are typically very complex and even subproblems are challenging. This article focuses on the decoding problem i.e. finding the most likely label sequence for a given output of a classifier such as neural networks (NNs), Hidden Markov Models (HMMs) or Conditional Random Fields (CRFs).
}

Deep learning methods has pushed the research of complex tasks such as handwritten text recognition (see \cite{graves2009offline}). The special needs of such complex tasks require advanced decoding methods. For example, a typical subproblem in full text recognition is structuring the recognizers output into a sequence of regions of words, punctuations and numbers.
In many cases, the most likely label sequence yields an acceptable segmentation. However, it happens that this label sequence is not feasible i.e. it does not match the expected structure and has to be corrected. Finding the optimal feasible structure is one of many applications of this article. For this aim, we describe feasible structures by regular expressions -- a powerful pattern sequence which is used in nearly all computational text processing systems such as text editors and programming languages like Java or Python. We then derive an algorithm based on finite automata that yields the most likely label sequence fitting the previously described regular expression.

Beyond finding the optimal feasible label sequence fitting an expected structure (regular expression), we gain several other features since we also consider the functionality of capturing groups.
A capturing group defines a part of the regular expression. The associated part of the matching label sequence can be used to structure the decoding result for further analysis. In case of our previous example, we obtain a complete segmentation into words, numbers and symbols without additional parsing facilitating the calculation of the matching subsequence and the likelihood. We just define word, number and symbol capturing groups. The complete decoding can be done in a few lines of code.

Keyword spotting is another obvious application which can be solved very conveniently.
The keyword is either the beginning of the line or there is a space or another separating symbol (quotation marks, opening parenthesis, etc.) before the keyword. With the common notation of regular expressions, this pattern may be captured by inserting {\tt (.*(?<pre>[ "(-]))?} before the keyword which means: If there is anything before the keyword, it ends with at least one of the aforementioned symbols. This last symbol (if there is one) is contained in the capturing group {\tt pre}. Information about a group like its probability, containing text or its positions in the sequence are very important for the keyword spotting and will be provided directly by the derived algorithm. A low probability of the {\tt pre}-group, for example, might indicate that a letter is more likely than our separating symbol such that the spotted character sequence is only part of a larger word. Analogously, there is an equivalent group after the keyword.

Regular expressions can be very complex and the calculation of the probability of all feasible sequences can be very time consuming. We give an approximation of the most likely label sequence which we motivate theoretically and experimentally. The approximation is also fundamental to the proposed decoder since a conventional $A^*$-search suffers from a combinatorial explosion of all feasible sequences and leads to inefficient decoding times. It is developed for {\it neural networks trained by Connectionist Temporal Classification (CTC)}. Thus, CTC-trained systems are assumed all over the paper. Some of the currently most successful handwriting recognition systems were trained with CTC as shown in several competitions. To give just one example, the probably most challenging real world task is the Maurdor project which was won by A2IA in 2014 using CTC (see \cite{moysset2014a2ia}). CTC is not limited to text recognition. Recently the performance of several speech recognition systems trained with CTC equaled those of other state of the art methods (e.g. \cite{graves2014towards,sak2015learning}).

The proposed algorithm is an essential part of the award winning systems \cite{strauss2014citlab} and \cite{leifert2014citlab} which were also trained with CTC. Recently, the system reaffirmed the capability by winning the HTRtS15 competition \cite{htrts15}.

The performant connection between regular expressions and machine learning algorithms has been investigated in previous articles.
In the context of speech recognition, \cite{mohri2008speech} showed in detail how to incorporate static prior knowledge like $n$-grams or phoneme models into finite state transducers. {Although the authors exploit similar models to do the decoding, the purpose differs from ours since they model more static connections between ton, speech and language while we aim at a flexible, adaptive decoding algorithm.} % Die kombinieren LM mit Automaten
Earlier, \cite{dupont2005links} provided a comprehensive analysis of links between probabilistic automata (i.e. automata with a probabilistic transition) and HMMs from a theoretical point of view {finally concluding -- among other -- that there is a correspondence between both models. This basically means, HMMs can be seen as the probabilistic version of finite automata.}%\marginpar{Stimmt das?!}

Some links between regular expressions, their corresponding automata and HMMs are given in \cite{krogh1998introduction}. The authors showed how to create HMMs from regular expressions to detect biological sequences. A similar but generalized approach is given in \cite{kessentini2013word}. There the authors construct a simplified HMM model for a general text line in the context of word spotting. These text line models basically consist of the keyword surrounded by space and filler models. They also proposed an enhanced model where only the
prefix or suffix of the keyword is given. This model allows a set of feasible words containing the defined prefix or suffix.

Recently, Bideault et al. published a similar approach to ours in \cite{bideaultspotting}. They proposed an HMM - BDLSTM hybrid model for word spotting exploiting regular expressions. Their model uses the posterior probability of the network as emission probability of the HMM (which means using $\p(y|x)$ as estimator for $\mathrm p(x|y)$, where $x$ is the hidden variable and $y$ is the observation). Analogously to \cite{kessentini2013word}, they build small HMM models in advance (e.g. for a keyword, for digits or letters) and combine them to a model capturing the regular expression. The authors then applied their model to keyword and ``regex'' spotting.

In contrast to the above articles, we do not make use of an HMM model. Yet in \cite{kessentini2013word}, the HMMs work only as convenient graphical model for decoding rather than as classifier. Instead of using a generative model to find the most likely sequence, our algorithm is based on the original graphical structure of the regular expressions: The finite state automata. If the automaton accepts a label sequence, it is feasible. Hence, we are able
to search in the output of a neural network for any regular expression without any previously created or trained generative model. That means as input simply serve a regular expression and the network's output matrix and the output is the most likely sequence, their probability or the capturing groups defined by the regular expression.

The remainder of this article is organized as follows: We first give a formal definition of decoding (Section \ref{sec:decoding}). In Section \ref{sec:automata}, we give a brief introduction to regular expressions and automata. Furthermore, we modify the automaton slightly to adapt it to the NN-decoding requirements. We introduce the RegEx-Decoder in Section \ref{sec:regexdecoder}. We finish with some experiments (Section \ref{sec:experiments}) and a conclusion. The appendix provides the proofs of our theorems for theoretically interested readers.
\section{{Training and decoding}}\label{sec:decoding}
This section introduces the CTC training scheme for neural networks and some basic aspects of their decoding. We mainly follow the notation of \cite{gr06}.

Let $\albet$ be the alphabet and $\albet' = \albet \cup \{\symbnac\}$ where $\symbnac$ is an artificial garbage label (also called blank) indicating that none of the labels from $\albet$ are present. We call the garbage label \emph{not a character (NaC)} in the following. An element of $\albet$ is called \emph{character} and appears in the ground truth. Sequences from $\albet^*:=\bigcup_{t\in\IN}\albet^t$ are called \emph{words}. Elements of $\albet'$ are \emph{labels} and represent different classes of the NN. Sequences of $(\albet')^*$ are called \emph{paths}. The most likely path is called \emph{best path}.
Assume a neural network which maps an input $\fett X$\footnote{In contrast to $\fett Y$, both dimensions of $\fett X$ may vary.} to a matrix $\fett Y \in \bigcup_{T=1}^\infty[0,1]^{T\times |\albet'|}$ of probabilities per position and label. I.e. $y_{t,l}$ denotes the probability for the $l$th label at position $t$. Note that we assume that $\forall t: \sum_l y_{t,l} = 1$ and $\forall t,l: y_{t,l}>0 $ throughout the paper.

To map a path $\fett \pi$ to a word $\fett z$, %one \emph{collapses} a path $\fett \pi$, i.e.
one merges consecutive identical $\pi_t$ and deletes the \nac{}s.
Let $\mathcal{F}: (\albet')^* \rightarrow \albet^*$ define the related function which maps a path to a word. More precisely: $\mathcal{F}(\fett \pi) = \mathcal{D}(\mathcal S(\fett \pi)) $ is the composition of two functions $\mathcal{D}$ and $\mathcal{S}$ where $\mathcal{S}$ deletes all consecutive identical labels and $\mathcal D$ deletes all remaining \nac{}s.

We assume that the likelihoods $y_{t,c}$ are conditionally independent for distinct $t$ given $\fett X$. Thus, the likelihood of any path $\fett \pi$ is given as
\begin{align}
\p(\fett \pi | \fett X) =& \prod_{t=1}^T y_{t,\pi_t} .\label{eq:pathProb}
\end{align}
The probability of any word $\fett z$ is then the sum of the probabilities of all paths mapping to $\fett z$:
\begin{align*}
\p(\fett z|\fett X) = \sum_{\fett \pi \in \mathcal F^{-1}(\fett z)}\p(\fett \pi | \fett X).
\end{align*}
Let $\fett{\overline z}\in (\albet')^*$ be the extension of the word $\fett z \in \albet^*$, that means we add a \nac{} before $\fett z$, after $\fett z$ and between each pair of characters. Thus, $|\fett{\overline z}| = 2 |\fett z| +1$.
Then one could calculate $\p(\fett z|\fett X)$ in an iterative manner: The forward variable $\alpha_i(t)$ denotes the probability of the prefix $z_1,\dots,z_{\lceil\frac{i-1}{2}\rceil}$ of $\fett z$ at time $t$ given $\fett X$ and, hence, $\alpha_1(t)$ denotes the probability of the empty word prefix. Thus,
\begin{align*}
\alpha_{1}(t) &= \prod_{t'=1}^ty_{t',\overline z_1} =\prod_{t'=1}^ty_{t',\symbnac}.
\end{align*}
For $t=1$, the other initial $\alpha_i(1)$ are
\begin{align}
\alpha_{2}(1) &= y_{1,\overline z_2} =y_{1,z_1}\notag\\
\alpha_{i}(1) &= 0\quad \forall i>2.\notag
\intertext{Then, probability of any prefix at time $t$ is}
\alpha_{i}(t) &= y_{t,\overline z_i}\sum_{k\in \phi_{\fett{\overline z}}(i)} \alpha_{k}(t-1) \label{eq:forwardrecursion}
\intertext{where}
\phi_{\fett{\overline z}} (k) &= \begin{cases}
\{k-1,k\} &\text{ if } {\overline z}_k= {\overline z}_{k-2} \text{ or } k=2\\
\{k-2,k-1,k\} &\text{ else}
\end{cases}.\notag
\end{align}
The probability $\p(\fett z|\fett X)$ is then equal to the sum $\alpha_{|\fett{\overline z}|}(T)+\alpha_{{|\fett{\overline z}|-1}}(T)$ of the two last forward variables at time $T$.
Analogously, one can start at $T$ and calculate the suffix probabilities:
\begin{align*}
\beta_{|\fett{\overline{Z}}|}(T) &= 1 \\
\beta_{|\fett{\overline{Z}}|}(t) &= \prod_{t'=t+1}^Ty_{t',\symbnac} \\
\beta_{|\fett{\overline{Z}}|-1}(T) &= 1\\
\beta_{i}(T) &= 0\quad \forall i<T-1\\
\beta_{i}(t) &= y_{\overline z_i,t+1}\sum_{k\in \psi_{\fett{\overline z}}(i)} \beta_{k}(t+1)
\intertext{where}
\psi_{\fett{\overline z}} (k) =& \begin{cases}
\{k+1,k\} &\text{ if } {\overline z}_k= {\overline z}_{k+2} \text{ or } k=|\fett{\overline{z}}|-1\\
\{k+2,k+1,k\} &\text{ else}
\end{cases}.
\end{align*}

\subsection{Connectionist Temporal Classification}\label{sec:ctc}
To optimize the log likelihood objective function
\[\mathcal O(\fett z, \fett X)=-\ln\p(\fett z|\fett X)\rightarrow \max,\]
Connectionist Temporal Classification uses gradient decent. Hence, we need to provide the gradient
\begin{align*}
\frac{\partial \mathcal O(\fett z, \fett X)}{\partial y_{t,l}}&= \frac{1}{\p(\fett z|\fett X)} \sum_{\substack{\pi\in\mathcal F^{-1}(\fett z)\\ \pi_t = l}} \prod_{\substack{t'=1\\t'\neq t}}^T y_{t',\pi_{t'}}
\end{align*}
for any $t\in\{1,\dots,T\}$ and $l\in\albet'$. With the above defined $\alpha$ and $\beta$,
\[\sum_{\substack{\pi\in\mathcal F^{-1}(\fett z)\\ \pi_t = l}} \prod_{\substack{t'=1\\t'\neq t}}^T y_{t',\pi_{t'}} = \sum_{\substack{i=1\\\overline z_i=l}}^{|\fett{\overline z}|}\frac{\alpha_i(t)\beta_t(t)}{y_{t,l}}.\]
Starting with $\frac{\partial \mathcal O(\fett z, \fett X)}{\partial y_{t,l}}$, the standard backpropagation algorithm propagates error into the network and optimizes its parameters.
A more detailed description can be found in \cite{gr06}.

\subsection{Decoding}
During the prediction phase, we are interested in the $\fett z \in \albet^*$ with maximizes $\p(\fett z|\fett X)$. Usually, there are conditions which allow only certain $\fett z \in \albet^*$. A common example is the condition that $\fett z$ must be an element of a certain vocabulary $\dict$.
If the allowed words are restricted to a finite vocabulary of reasonable size, one can find the most likely vocabulary item by calculating $\p(\fett z|\fett X)$ for each $\fett z \in \dict$ individually using the forward probabilities $\alpha$ as introduced above.
We call this decoding procedure \emph{string-by-string decoding} since we calculate the word probabilities one after the other.
We approximate the word likelihood by the probability of its most likely path throughout this article by replacing the sum by maximum in eq. \eqref{eq:forwardrecursion}. The most probable path yields an alignment of positions and class labels, it speeds up the calculation and -- since there is typically one dominant path -- it is a reasonable approximation to $\p(\fett z | \fett X)$. Thus,
\begin{align}
{\fett z}^*=& \argmax_{\fett z \in \dict}\max_{\fett \pi \in \mathcal{F}^{-1}(\fett z)} \p(\fett \pi | \fett X). \notag
\end{align}

\section{Regular expressions and finite automata}\label{sec:automata}

Finding the most likely label sequence following a special structure requires a tool
for describing this structure. We model them as
regular languages which have been developed to describe such complex
structures (see \cite{FrExp06}). There is a correspondence between
regular languages / regular expressions and finite-state automata -- a
model of computation of that language. We use both -- the regular expression to describe
the set of expected sequences and the automaton to exploit the transition graph during the
decoding process. This section gives a brief introduction in the field of
regular expressions and finite state automata. Readers who are already familiar with
regular expressions and finite state automata may proceed with Subsection \ref{subsec:adaptationToF}.
%We will analyze the requirements we need and give a construction algorithm for the required automaton.
%Different demands on the decoding require different properties of the automaton.

\begin{defi}[regular expression / regular language] %\farbig{Definition \"uberpr\"ufen!!!}
The empty word $\varepsilon$, the empty set $\emptyset$ and $a\in \albet$
are \emph{regular expressions} denoting the \emph{regular languages}
$\{\varepsilon\}$, $\emptyset$ and $\{a\}$, respectively. If $\mathcal L(r_1)$
and $\mathcal L(r_2)$ are two regular languages defined by the regular
expressions $r_1$ and $r_2$, then also $\mathcal L(r_1)\cup \mathcal L(r_2)
= \mathcal L(r_1|r_2)$ (alternation, i.e. $r_1$ or $r_2$), $\mathcal L(r_1)
\mathcal L(r_2) = \mathcal L(r_1r_2)$ (concatenation of $r_1$ and $r_2$)
and $(\mathcal L(r_1))^*= \mathcal L(r_1^*)$ (Kleene closure, i.e. the set of all finite sequences of words from $\mathcal L(r_1)$) are regular languages. There are no
other regular languages than the above.
\end{defi}

Thus, regular expressions define languages containing specific sequences of literals from
$\albet$. Those expressions can be represented in a model of computation. This model is known as

\begin{defi}[Automaton]\label{defi:AutomatonOriginal}
The \emph{nondeterministic finite automaton (NFA)} $N$ is a 5-tuple
$(Q,\albet \cup \{\varepsilon\},\delta,q_0,F)$, where $Q$ is the finite set
of states, $\albet$ is the alphabet, $\varepsilon$ is the empty word,
$\delta: Q\times \albet \cup \{\varepsilon\} \rightarrow \mathcal P(Q)$ is
the state transition function, $q_0\in Q$ is the initial state and
$F\subseteq Q$ is the set of final states.

We call $N$ a \emph{deterministic finite automaton (DFA)} iff $\forall q\in
Q: \delta (q,\varepsilon) = \emptyset$ and $\forall q\in Q, a\in \albet:
|\delta(q,a)|\leq 1$.

\end{defi}

For any regular expression there is an NFA accepting the corresponding language and the other way around. There may be more
than one automaton accepting a regular language. Analogously, there may
be more than one regular expression describing the same language. For any specific regular expression, we will create a corresponding NFA using \emph{Thompson's Construction Algorithm} (for details see
\cite{thompson1968programming} according to which any regular expression can be converted by some combination of
the elementary NFAs depicted in Figure
\ref{fig:thompson}). An equivalent\footnote{Two finite automata are equivalent if they accept the same language.} DFA is obtained by the \emph{Subset Construction
Algorithm}.

\begin{figure}
\subfigure[][$A|B$]{
\begin{tikzpicture}[scale=0.9, transform shape]
\node[circle,draw,minimum size=6mm] (1) at (0,0) {}; \node [rectangle,
draw,rounded corners, minimum height=8mm,minimum width=10mm] (group1) at
(2,1) {$N_A$};
\node [rectangle, draw,rounded corners, minimum
height=8mm,minimum width=10mm] (group2) at (2,-1) {$N_B$};
\node[circle,draw,minimum size=6mm] (2) at (4,0) {};
\draw[->] (1)--(group1.west) node[midway, above] {$\varepsilon$};
\draw[->] (1)--(group2.west) node[midway, above] {$\varepsilon$};
\draw[->] (group1.east) -- (2) node[midway, above] {$\varepsilon$};
\draw[->] (group2.east) -- (2) node[midway, above] {$\varepsilon$};
\end{tikzpicture}\\[0.5cm]
}\hfill
\subfigure[][$A^*$]{
\begin{tikzpicture}[scale=0.9, transform shape]
\node[circle,draw,minimum size=6mm] (1) at (0,0) {}; \node [rectangle, draw,rounded
corners, minimum height=8mm,minimum width=10mm] (group1) at (2,0) {$N_A$};
\node[circle,draw,minimum size=6mm] (2) at (4,0) {}; \draw[->] (1)--(group1.west)
node[midway, above] {$\varepsilon$};
\draw[->] (group1.north east) .. controls (3,1) and (1,1) .. node[above]
{$\varepsilon$} (group1.north west); \draw[->] (group1.east) -- (2)
node[midway, above] {$\varepsilon$}; \draw[->] (1) .. controls (1,2) and
(3,2) .. node[above] {$\varepsilon$} (2);
\end{tikzpicture}\\[0.5cm]
}\hfill
\subfigure[][$AB$]{
\begin{tikzpicture}[scale=0.9, transform shape]
\node [rectangle, draw,rounded corners, minimum height=8mm,minimum
width=10mm] (group1) at (0,0) {$N_A$}; \node [rectangle, draw,rounded
corners, minimum height=8mm,minimum width=10mm] (group2) at (2,0) {$N_B$};
\draw[->] (group1.east)--(group2.west) node[midway, above]
{$\varepsilon$};
\end{tikzpicture}\\[0.5cm]
}
\caption{Schematic representation of atomic NFAs resulting from Thompson's
Algorithm. $A$ and $B$ are regular expressions and $N_A$ and $N_B$ are the related NFAs. Other quantifiers or operators can be expressed by those three.}\label{fig:thompson}
\end{figure}
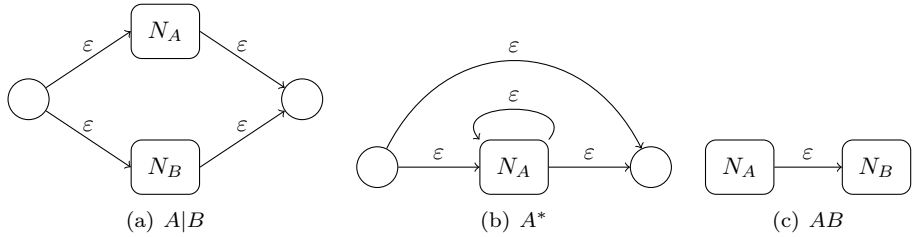

Generally, the subset construction algorithm generates a DFA with $2^n$ states
if $n$ is the number of NFA states. \cite{meyer1971economy} showed that there
are languages which also require exactly $2^n$ DFA states, i.e. the NFA is
exponentially more succinct than the DFA. Instead of using DFAs, we substitute the states of the NFA by their epsilon closure and use the resulting NFA.
That means, we delete each $\varepsilon$-transition and replace it by the next non-$\varepsilon$-transition. The resulting automaton will accept the same language as the original one.
%Furthermore, a DFA looses features compared to the NFA.

\subsection{Adaptation to $\mathcal F$}\label{subsec:adaptationToF}
The function $\mathcal F$ (see Section \ref{sec:decoding}) maps a label sequence to a word by
deleting consecutive identical labels ($\mathcal S$) and deleting \nac{}s ($\mathcal D$). To allow optional \nac{}s between different characters
during the decoding, we extended the word $\fett z$ to $\fett{\bar z}$.
Analogously, we extend the transitions between the NFA-states
the following way:
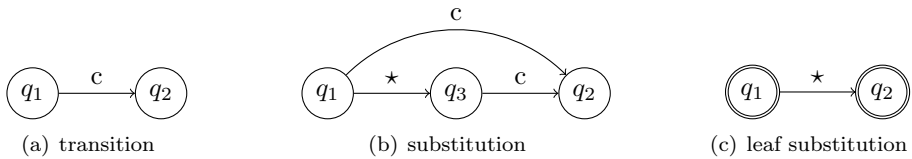
\begin{figure}
\subfigure[][transition]{
\label{fig:transition}
\tikzstyle{every state}=[inner sep=3pt,minimum size=10pt]
\begin{tikzpicture}[auto]
\node[state] (q1) at (0,0) {$q_1$}; \node[state] (q2) [right=of q1]
{$q_2$}; \path[->] (q1) edge node {c} (q2);
\end{tikzpicture}}
\hfill
\subfigure[][substitution]{
\label{fig:substitution}
\tikzstyle{every state}=[inner sep=3pt,minimum size=10pt]
\begin{tikzpicture}[auto]
\node[state] (q1) at (0,0) {$q_1$}; \node[state] (nac) [right=of q1]
{$q_3$}; \node[state] (q2) [right=of nac] {$q_2$}; \path[->] (q1) edge
[out=45,in=135] node {c} (q2); \path[->] (q1) edge node {$\symbnac$}
(nac); \path[->] (nac) edge node {c} (q2);
\end{tikzpicture}}
\hfill
\subfigure[][leaf substitution]{
\label{fig:leafsubs}
\tikzstyle{every state}=[inner sep=3pt,minimum size=10pt]
\begin{tikzpicture}[auto]
\node[state,accepting] (q1) at (0,0) {$q_1$}; \node[state,accepting]
(q2) [right=of q1] {$q_2$}; \path[->] (q1) edge node {$\symbnac$} (q2);
\end{tikzpicture}}
\caption{Illustrations for extended NFAs. Double circles represent final state.}
\end{figure}
Figure \ref{fig:transition} shows the transition which is substituted by Figure \ref{fig:substitution}. If
$q_1$ is final also $q_3$ is final. $q_1$ and $q_2$ could even be the
same state. Final leaf states (i.e. states without outgoing edges) are connected to another finale state by reading a \nac{} as shown on the Figure \ref{fig:leafsubs}.
Algorithm \ref{alg:extAutomaton} provides the pseudo code for extending the automaton. It accepts the
language {$\overline{\mathcal L}(r):=\{\overline{\fett
w}\in\mathcal{L}(\symbnac{\tt ?} {\tt w}_1 \symbnac{\tt ?w}_2 \symbnac{\tt
?} \dots \symbnac{\tt ?w}_{|{\fett{\tt w}}|}\symbnac{\tt ?})|\fett{\tt
w}\in\mathcal{L}(\fett r)\}$} of words interrupted by optional\footnote{We use the regular expression notation ({\tt ?}) to mark symbols as optional.} \nac{}s.
This is the adaptation to the $\mathcal{D}$-part of $\mathcal{F}$.
%Section \ref{sec:decoding} where the \nac{}s are optional as well.

\IncMargin{1em}
\begin{algorithm}
\SetKwData{Left}{left}\SetKwData{This}{this}\SetKwData{Up}{up}
\SetKwFunction{Union}{Union}\SetKwFunction{FindCompress}{FindCompress}
\SetKwInOut{Input}{input}\SetKwInOut{Output}{output}
\Input{NFA $(Q,\albet,\delta,q_0,F)$}
\Output{Extended NFA $(\overline Q,\albet',\overline\delta,q_0,\overline F)$}
\BlankLine
$\overline Q\leftarrow Q $\;
$\overline F\leftarrow F $\;
\For{$q\in Q$}{
\emph{create new state $q'$}\;
$\overline Q\leftarrow\overline Q \cup \{q'\}$\;
$\overline\delta(q,\symbnac)\leftarrow\{q'\}$\;
\For{$a\in \albet$}{
$\overline\delta(q',a)\leftarrow\delta(q,a)$\;
}
\If{$q\in F$}{$\overline F \leftarrow \overline F \cup \{q'\}$\;}
}
\caption{extendAutomaton}\label{alg:extAutomaton}
\end{algorithm}\DecMargin{1em}

Instead of adapting $N$ also to $\mathcal{S}$, we leaf this step to the algorithm in Section \ref{sec:regexdecoder} to simplify the notation. Since $\mathcal{S}$ deletes identical consecutive labels, the continuation of a read label is left (see the ``$\cont$'' function in later sections).
We call the automaton adapted to $\mathcal{D}$ \emph{extended automaton} and symbolize it by $\overline{N}$.

\begin{ex}\label{ex:automaton}
We construct an automaton accepting the language $\mathcal{L} = \{\texttt{cat},\texttt{bat}\}$. The na\"ive alternation \texttt{cat|bat} of both words leads to an automaton with 14 states using Thomson's Construction and the above described extension. We could save 4 states and transitions by alternating only the first letters. The regular expression \texttt{(c|b)at} will generate the following automaton:
\begin{center}

\tikzstyle{every state}=[inner sep=3pt,minimum size=10pt]
\begin{tikzpicture}[shorten >=1pt,node distance=1.2cm,on grid,auto]
\node[state,initial] (q0) at (0,0) {};
\node[state] (nac1) [right=of q0] {};
\node[state] (c) [above right=of nac1] {};
\node[state] (b) [below right=of nac1] {};
\node[state] (nac2) [right=of c] {};
\node[state] (nac2b) [right=of b] {};
\node[state] (o) [below right=of nac2] {};
\node[state] (nac3) [right=of o] {};
\node[state,accepting] (u) [right=of nac3] {};
\node[state,accepting] (nac4) [right=of u] {};

\path[->] (q0) edge node[above=0mm] {$\symbnac$} (nac1);
\path[->] (q0) edge[out=45,in=180] node{c} (c);
\path[->] (nac1) edge node[above=0mm,near start] {c} (c);
\path[->] (c) edge node {$\symbnac$} (nac2);
\path[->] (c) edge[out=-45,in=180] node[above=1mm] {a} (o);

\path[->] (q0) edge[out=-45,in=-180] node[below]{b} (b);
\path[->] (nac1) edge node[above=1mm,near end] {b} (b);
\path[->] (b) edge node[below] {$\symbnac$} (nac2b);
\path[->] (b) edge[out=45,in=180] node[below]{a} (o);

\path[->] (nac2) edge node {a} (o) ;
\path[->] (nac2b) edge node[below=2mm,very near end] {a} (o) ;
\path[->] (o) edge node[below] {$\symbnac$}
(nac3); \path[->] (nac3) edge node[below] {t} (u) ;
\path[->] (o) edge
[out=45,in=135] node {t} (u) ;
\path[->] (u)edge node[below] {$\symbnac$} (nac4);

\end{tikzpicture}
\end{center}
If we aggregate the labels \texttt{c} and \texttt{b} like \texttt{[bc]at}, we could save two additional states and even 5 transitions. Thus, instead of using multiple arcs for connecting the same states but reading different labels, we aggregate them into one transition:
\begin{center}

\tikzstyle{every state}=[inner sep=3pt,minimum size=10pt]
\begin{tikzpicture}[shorten >=1pt,node distance=1.2cm,on grid,auto]
\node[state,initial] (q0) at (0,0) {}; \node[state] (nac1) [right=of q0]
{}; \node[state] (h) [right=of nac1] {};
\node[state] (nac2) [right=of h] {}; \node[state] (o) [right=of nac2] {};
\node[state] (nac3) [right=of o] {}; \node[state,accepting] (u) [right=of
nac3] {}; \node[state,accepting] (nac4) [right=of u] {};

\path[->] (q0) edge node[below] {$\symbnac$} (nac1); \path[->] (q0)
edge[out=45,in=135] node{c |b} (h); \path[->] (nac1) edge node[below] {c |b} (h);
\path[->] (h) edge node[below] {$\symbnac$} (nac2);
\path[->] (h) edge[out=45,in=135] node{a} (o);
\path[->] (nac2) edge node[below] {a} (o) ; \path[->] (o) edge node[below] {$\symbnac$}
(nac3); \path[->] (nac3) edge node[below] {t} (u) ;
\path[->] (o) edge
[out=45,in=135] node {t} (u) ;
\path[->] (u)edge node[below] {$\symbnac$} (nac4);

\end{tikzpicture}
\end{center}
Thus, there is at most one transition between any two states which reads possibly multiple labels.
Obviously, any accepted label sequence produces an emission sequence collapsing to
``cat'' or ``bat''. Note, that we need just 10 transitions where the decoding
process from Section \ref{sec:decoding} needs to calculate 7 table columns
for each word. If we add the words {\tt fat, rat, hat} to our list of accepted
words, the conventional decoding of Section \ref{sec:decoding} calculates 3.5 more
table columns than there are transitions in the automaton.
\end{ex}

\section{{Efficient decoding of regular expressions}}\label{sec:regexdecoder}
Given a regular expression $\fett r$ and the corresponding extended NFA $\overline N =
(\overline Q,\albet',\overline \delta,q_0,\overline F)$, we search for the most likely word $\fett
z^*$ in $\mathcal{L}( \fett r)$:
\[\fett z^* = \argmax_{\fett z\in\mathcal{L}(\fett r)}\max_{\fett \pi \in
\mathcal{F}^{-1}(\fett z)}\p(\fett \pi | \fett X).\] In contrast to
calculating the likelihood of every single feasible word from $\mathcal L(\fett r)$, we exploit the graphical
structure of $\overline N$ to find $\fett z^*$. This can be done very efficiently if $\overline N$
is succinct (as e.g. in Example \ref{ex:automaton}).

\subsection{$A^*$ and beam search}
In this subsection, we review two standard algorithms -- the $A^*$-search and the beam search -- which are standard approaches of decoding.

Algorithm \ref{alg:Astar} describes a na\"ive \emph{$A^*$-search} algorithm on regular expressions that returns the most likely path.
This algorithm yields the best result but it can be time consuming because of the huge number of possible paths. To cut unlikely paths, we define an upper bound $\overline \p(\fett \pi,t|\fett X)$ for the final probability $\p(\fett\pi\fett \tau|\fett X)$ of the final path $\fett \pi\fett \tau$ starting with the prefix $\fett \pi$ (position $1$ to $t$). In our experiments, we filled up $\fett \pi$ with a $\fett \beta$ suffix (i.e. $\fett \tau:=\fett \beta_{t+1:T}$) such that $\overline \p(\fett \pi,t|\fett X):= \prod_{r=1}^t y_{r,\pi_{r}} \prod_{s=t+1}^Ty_{s,\beta_s}$. Another heuristic which appears to work well in practice is to sort the prefix list $\raum L$ by $\frac{\overline\p(\fett \pi, t|\fett X)}{t}$. This sorting yields a quick first best guess such that unlikely paths can be deleted soon.
\IncMargin{1em}
\begin{algorithm}
\SetKwInOut{Input}{input}\SetKwInOut{Output}{output}
\Input{Network output $\fett Y$, extended NFA $\overline N = (\overline Q,\albet',\overline\delta,q_0,\overline F)$}
\Output{most likely feasible path $\fett \pi^*$}
\BlankLine

\For{$\gamma\in \albet'$}{
\For{$q'\in \delta(q_0,\gamma)$}{
Add $(q',\gamma,1)$ to $\mathbb L$\tcc*[r]{initialize $\raum L$}
}
}

\While {$\mathbb L$ not empty}{
$(q,\fett \pi,t) \leftarrow$ Item from $\mathbb L$ with maximum $\frac{\overline\p(\fett \pi, t|\fett X)}{t}$\;
Remove $(q,\fett \pi,t)$ from $\mathbb L$\;
\uIf{$t<T$}{
\For{$\gamma\in \albet'\setminus\{\pi_t\}$}{
\For{$q'\in \delta(q,\gamma)$}{
Add $(q',\fett \pi \gamma,t+1)$ to $\mathbb L$\;
}
}
Add $(q,\fett \pi \pi_t,t+1)$ to $\mathbb L$\tcc*[r]{cover the $\mathcal S$ part of $\mathcal F$}
}\uElseIf {$q\in F$}{
$\fett \pi^* \leftarrow \fett \pi$\;
Remove all $(q',\fett \pi',t')\in \mathbb L$ with $\overline \p(\fett \pi', t'|\fett X) <\p(\fett \pi|\fett X)$\;
}

}
\caption{$A^*$-search}\label{alg:Astar}
\end{algorithm}\DecMargin{1em}

Since the number of feasible paths grows exponentially in the worst case, there is a standard heuristic to reduce the search space called \emph{beam search}. For example in \cite{graves2014towards}, the authors introduced a beam search algorithm for efficient decoding in case of speech recognition which allows only $n$ prefixes\footnote{$n$ is called the \emph{beam width}.} at any
position. Algorithm \ref{alg:beam} contains its pseudo code adapted to our problem. Generally, beam search does not guaranty to find the optimal sequence. The given algorithm has the additional drawback that it does not even guaranty to find any feasible path at all since the final list $\raum L$ could contain only $(q,\fett \pi,T)$ with $q\not\in{F}$.
\IncMargin{1em}
\begin{algorithm}
\SetKwInOut{Input}{input}\SetKwInOut{Output}{output}
\Input{Network output $\fett Y$, extended NFA $\overline N = (\overline Q,\albet',\overline\delta,q_0,\overline F)$}
\Output{most likely feasible path $\fett \pi^*$}
\BlankLine
\For{$\gamma\in \albet'$}{
\For{$q'\in \delta(q_0,\gamma)$}{
Add $(q',\gamma,1)$ to $\mathbb L$\tcc*[r]{initialize $\raum L$}
}
}
\For{$i\leftarrow 2$ \KwTo $T$}{
$\overline{ \mathbb L}\leftarrow$ \text{ the $n$ most likely item of} $\mathbb L$\;
$\mathbb L\leftarrow \{\}$\;
\For {$(q,\fett \pi,t) \in \overline{\mathbb L}$}{
\For{$\gamma\in \albet'\setminus\{\pi_t\}$}{
\For{$q'\in Q : q'\in \delta(q,\gamma)$}{
Add $(q',\fett \pi \gamma,t+1)$ to $\mathbb L$\;
}
}
Add $(q,\fett \pi \pi_t,t+1)$ to $\mathbb L$\tcc*[r]{cover the $\mathcal S$ part of $\mathcal F$}
}
}
$\fett \pi^* \leftarrow$ $\fett \pi$ from $(q,\fett \pi,t)\in\overline{\mathbb L}$ with maximum $\p(\fett \pi|\fett X)$ and $q\in F$\;

\caption{beam search}\label{alg:beam}
\end{algorithm}\DecMargin{1em}

\subsection{RegEx-Decoder}
In this subsection, we introduce another decoding algorithm which exploits the structure of the given automaton and thus is more efficient than the $A^*$-search and guaranties -- under mild conditions -- to return the most likely path at the same time. In contrast to the token passing algorithm from \cite{young1989token}, one transition may read several input labels. We finally show that considering the three most likely labels per arc and position is sufficient. This feature allows us to preprocess the network output $\fett Y$ such that each arc only processes the most likely of their reading outputs which avoids unnecessary calculations. Additionally, we keep less paths compared to the token passing algorithm.

Let $\Pi(t,q',q)$ be the set of prefixes of $\mathcal F^{-1}(\mathcal L(\fett r))$ of length $t$ on condition that the automaton moves from state $q'$ to $q$ at position $t$.
Instead of keeping all possible prefixes, we only keep one prefix per arc, label of that arc and time point:
The probability of most likely prefix from $\Pi(t,q',q)$ is denoted by $\alpha^1_{t,q',q}$.
Multiply labeled arcs have different super scripts $i$ of $\alpha^i_{t,q',q}$ each of them corresponding to a different label of $(q',q)$.
The $\alpha_{t,q',q}^i$ can be calculated iteratively by
\begin{align*}
\alpha_{t,q',q}^i &= \max_{\substack{\fett\pi\in\Pi(t,q',q)\\\pi_t\not \in \left\{\emission^j_{t,q',q}| j < i \right\}}}\p(\fett\pi|\fett X) % = \prod_{t=1}^{|\fett \pi|} y_{t,\pi_t}
\end{align*}
where $\emission_{t,q',q}^i $ denotes the ending label $\pi_t$ of the specific $t$-prefix $\fett \pi \in \Pi(t,q',q)$
which has a likelihood of $\alpha_{t,q',q}^i$ (i.e. $\emission_{t,q',q}^i\neq\emission_{t,q',q}^j$ for $i\neq j$).\footnote{Let $\alpha^i_{t,q',q}$ be the probability of the most likely prefix $\fett \pi^i$, then $\pi^i_t=\emission_{t,q',q}^i$. Further, $\alpha^i_{t,q',q}>\alpha^{i+1}_{t,q',q}$.}
If we maximize over an empty set, we assume the result is zero. Let $\fett \alpha$ be a variable containing $\alpha^i_{t,q',q}$ for each $i,t$ and $(q',q)$.
Let $ \raum P(q) = \{q' \in \overline Q\;|\;\exists a\in \albet' :
q\in\overline\delta(q',a)\}$ be the set of predecessor states of $q$.
\begin{rem}
Let $\overline N$ be the extended automaton with respect to a regular expression $\fett r$ and
$ \fett \alpha$ as defined above.
The probability of the \emph{most likely path}
$\fett \pi^*(\fett r)$ with $\mathcal F(\fett \pi^*(\fett r))\in
\mathcal{L}(\fett r)$ is given by
\begin{align*}
\p(\fett \pi^*(\fett r)|\fett X):&=\max_{\fett z\in\mathcal{L}(r)}\max_{\fett \pi \in \mathcal{F}^{-1}(\fett z)}\p(\fett \pi | \fett X)\\
&= \max_{q\in F, q'\in\raum P(q)}\max_{\fett \pi \in \Pi(T,q',q)}\p(\fett \pi | \fett X)\\
&= \max_{q\in F, q'\in\raum P(q)}\alpha^1_{T,q',q}.
\end{align*}
\end{rem}
Thus, we only need $\alpha^1_{T,q',q}$ to calculate the likelihood of $\fett r$ with respect to $\fett Y$. Unfortunately, we need also the preceding $\fett \alpha^i$ for $i>1$ to calculate $\fett\alpha^1$.

Let
\[\likeliest_{t,q',q}^i := \argmax_{\substack{a\in\albet'\setminus\bigcup_{j<i}\left\{\likeliest^j_{t,q',q}\right\} \\ q\in \delta(q',a)}}y_{t,a}\]
be the
$i$th likely label per arc $(q',q)$ and position $t$.
This especially means $y_{t,\likeliest_{t,q',q}^1}\geq y_{t,\likeliest_{t,q',q}^2} \geq \dots$. Obviously, the initial values of $\fett \alpha$ are $\alpha^i_{1,q',q}=y_{1,\likeliest_{1,q',q}^i}$ if $q'=q_0$ and $\alpha^i_{1,q',q}=0$ else.
\begin{rem} \label{rem:dealingWithF}
Note that every non-\nac-arc represents a character or a group of equivalent characters of the regular expression.
Thus, two consecutive arcs must not read the same label $a\in\albet$ at consecutive positions since this means moving two characters forward in the accepted word. But a sequence of identical consecutive labels is mapped to one character by $\mathcal F$ which means $\mathcal F$ allows only one step forward.
Thus, if the most likely previous arc $(q'',q')$ ends on $\emission^1_{t-1,q'',q'}=\likeliest^1_{t,q',q}$,
we calculate the $t$-prefix probability by either combining the most likely $(t-1)$-prefix \emph{not reading $\likeliest_{t,q',q}^1$} with $\likeliest_{t,q',q}^1$ or we keep the most likely $(t-1)$-prefix extending it by the \emph{second most likely label} $\likeliest_{t,q',q}^2$. In the first case, we have to calculate also $\fett \alpha^2_t$ for all arcs.
\end{rem}

There are two {possible types of contributions} to calculate $\alpha^i_{t,q',q}$: We either come from a previous arc (i.e. append a new label) or we continue reading the label of the previous prefix through $(q',q)$ (i.e., stay on the arc and cover the $\mathcal S $ part of $\mathcal F$). For the most likely $t$-prefix the likelihood $\alpha^1_{t,q',q}$ is obviously calculated by
\begin{align*}
\alpha^1_{t,q',q} &= \max\{ \app(t,q',q,1), \cont(t,q',q,1)\}
\intertext{where}
\app(t,q',q,1) &= \max_{q''\in \raum P(q')}\max_{k,a} \left\{\alpha_{t-1,q'',q'}^k y_{t,a}\:|\:a\in\albet'\setminus \{\emission^k_{t-1,q'',q'}\} : q \in \overline \delta (q',a) \right\}\\
\cont(t,q',q,1) &= \max_{k} \left\{\alpha_{t-1,q',q}^k y_{t,\emission^k_{t-1,q',q}}\right\}.
\end{align*}
A straight forward generalization with the additional restriction not to read $\emission^j_{t,q',q}$ ($j<i$) leads to the general calculation schema of $\alpha^i_{t,q',q}$:
\begin{align}
\alpha^i_{t,q'q} & =\max \left\{ \app(t,q'q,i), \cont(t,q',q,i) \right\}\label{eq:alpha1Exact}
\intertext{where}
\begin{split}
\app(t,q',q,i) &= \max_{q'' \in \raum P(q')} \max_{k,a}\Big\{\alpha_{t-1,q'',q'}^ky_{t,a} | \\ &
\qquad a \in \albet'\setminus \left(\{\emission^k_{t-1,q'',q'}\}\cup\{\emission^j_{t,q',q}| j < i \}\right):\: q \in \overline \delta (q',a) \Big\}
\end{split}\label{eq:app}
\\
\cont(t,q',q,i) &= \max_k \Big\{ \alpha_{t-1,q',q}^k y_{t,\emission_{t-1,q',q}^k} | \forall j<i:\emission_{t-1,q',q}^k\neq \emission^j_{t,q',q}\Big\} \label{eq:cont}.
\end{align}
Starting from $q_0$, we now calculate $\alpha^i_{t,q',q}$ for each $i$, arc $(q',q)$ and time point $t$. The maximum $\alpha^1_{T,q',q}$ for $q\in F$ will be the maximum probability of all feasible paths. We yet even reduced the search space by keeping only one prefix probability per arc, allowed label of the specific arc and time point. This means, we have a polynomial time complexity (instead of an exponential time complexity as the $A^*$-search). More precisely,
the calculation of $\fett \alpha$ requires $\mathcal{O}(T|\albet'|\sum_{q\in Q} \sum_{q'\in \raum P(q)}| \raum P(q')| )$ multiplications
in the worst case. Although the running time seems to be cubic in the number of states $|Q|$,
in practical applications, the number of predecessors of each state is typically limited by a constant.
Thus, the expected running time is rather linear in $Q$.

The most likely path can be found via simple backtracking.

\subsubsection*{Speed-up}\label{sec:speedup}
In the following, we analyze the most likely paths of $\fett \alpha$ and speed-up the process by avoiding unnecessary calculations. The speed-up is based on two theorems which finally lead to a time complexity which is independent of the number of labels in $\albet'$.
The first theorem states that it is sufficient to know $\alpha^1_{t,q'',q'}$ and $\alpha^2_{t,q'',q'}$ for every $q''\in\raum P(q')$ to calculate both $\app(t+1,q',q,1)$ and $\app(t+1,q',q,2)$. Additionally, we only need the three most likely probabilities $y_{t+1,a}$ per arc and time step no matter how many labels allow to move from $q'$ to $q$.
%%%%%%%%%%%%%%%%%%%%%%%%%%%%%%%%%%%%%%%%%%%%%%%%%%%%%%%%%%%%%%%%%%%%%%%%%%%%%
%%%%% Theorem %%%%%%%%%%%%%%%%%%%%%%%%%%%%%%%%%%%%%%%%%%%%%%%%%%%%%%
%%%%%%%%%%%%%%%%%%%%%%%%%%%%%%%%%%%%%%%%%%%%%%%%%%%%%%%%%%%%%%%%%%%%%%%%%%%%%
\begin{theo}\label{theo:exactAlpha}
Let $\Gamma(t,k,q'',q',q):=\{\likeliest_{t,q',q}^j \;|\; j\in\{1,2,3\}\}\setminus\{\emission^k_{t-1,q'',q'}\}$ the three most likely labels without the previous ending label $\emission_{t-1,q'',q'}^i$. Then for $i=1,2$ eq. \eqref{eq:app} simplifies to
\begin{align*}
\app(t,q',q,1)
&=\max\{\alpha^k_{t-1,q'',q'}y_{t,a} | {q''\in\raum P(q'), \: k\in\{1,2\},\: a\in\Gamma(t,k,q'',q',q)}\}
\\
\begin{split}
\app(t,q',q,2)
&= \max\{\alpha^k_{t-1,q'',q'}y_{t,a} | q''\in\raum P(q'),\: k\in\{1,2\},\:\\&\qquad\qquad a\in\Gamma(t,k,q'',q',q)\setminus\{\emission^1_{t,q',q}\}\} \quad\qquad .
\end{split}
\end{align*}

\end{theo}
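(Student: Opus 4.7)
The plan is to establish the simplifications by two independent reductions of the max in \eqref{eq:app}: first restrict the index $k$ from arbitrary positive integers to $k \in \{1,2\}$, then restrict the label $a$ from all outgoing labels of $(q',q)$ to the top three $\{\likeliest^1_{t,q',q},\likeliest^2_{t,q',q},\likeliest^3_{t,q',q}\}$. Since both simplifications restrict the feasible set of a max, the ``$\leq$'' direction is automatic, and it suffices to show that no contribution is lost.

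For the $k$-reduction, I would fix an arbitrary feasible triple $(q'',k,a)$ with $k \geq 3$ in the original formula. By construction the $\emission^{\,\cdot}_{t-1,q'',q'}$ are pairwise distinct, so the single label $a$ coincides with at most one of $\emission^1_{t-1,q'',q'}$ and $\emission^2_{t-1,q'',q'}$. Hence some $k' \in \{1,2\}$ satisfies $a \neq \emission^{k'}_{t-1,q'',q'}$, which keeps $(q'',k',a)$ feasible under the same constraints. Monotonicity $\alpha^{k'}_{t-1,q'',q'} \geq \alpha^k_{t-1,q'',q'}$ (from the definition of $\alpha^{j}$ as the $j$th largest prefix probability) then yields $\alpha^{k'}_{t-1,q'',q'}\, y_{t,a} \geq \alpha^k_{t-1,q'',q'}\, y_{t,a}$, so the $k \geq 3$ contributions are dominated by $k \in \{1,2\}$ ones.

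For the $a$-reduction, I would fix $q''$ and $k \in \{1,2\}$ and let $a^*$ maximize $y_{t,a^*}$ over the feasible labels. At most two labels are forbidden at time $t$, namely $\emission^k_{t-1,q'',q'}$ and (only if $i = 2$) $\emission^1_{t,q',q}$; therefore at least one of $\likeliest^1_{t,q',q},\likeliest^2_{t,q',q},\likeliest^3_{t,q',q}$ is feasible. If $a^*$ itself lies outside the top-3 set, then by definition of $\likeliest^{\,\cdot}$ we have $y_{t,a^*} \leq y_{t,\likeliest^3_{t,q',q}} \leq y_{t,\likeliest^{j}_{t,q',q}}$ for the smallest-indexed feasible $\likeliest^j_{t,q',q}$, and replacing $a^*$ by this $\likeliest^j_{t,q',q}$ does not decrease the contribution. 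Hence an optimizer always lies in $\Gamma(t,k,q'',q',q)$ for $i = 1$, and in $\Gamma(t,k,q'',q',q) \setminus \{\emission^1_{t,q',q}\}$ for $i = 2$, matching the claimed formulae.

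The main obstacle is the bookkeeping in the $a$-reduction: I need to enumerate how the (at most two) forbidden labels can coincide with elements of $\{\likeliest^1,\likeliest^2,\likeliest^3\}$ and verify in each subcase that the claimed restricted set is non-empty and still contains an optimizer. Degenerate cases in which arc $(q',q)$ carries fewer than three distinct labels are handled by the convention that any absent $\likeliest^j$ and any max over an empty set evaluates to $0$, which is consistent with the convention used in the definition of $\alpha^i_{t,q',q}$.
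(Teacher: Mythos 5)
Your argument is correct, but it is organized quite differently from the paper's. The paper fixes a predecessor $q''$ and runs an explicit case analysis on whether the previous ending label $\emission^1_{t-1,q'',q'}$ coincides with $\likeliest^1_{t,q',q}$ or with $\likeliest^2_{t,q',q}$, writing out in each resulting subcase the exact optimal combination of $\alpha^1_{t-1,q'',q'},\alpha^2_{t-1,q'',q'}$ with $y_{t,\likeliest^{1}_{t,q',q}},y_{t,\likeliest^{2}_{t,q',q}},y_{t,\likeliest^{3}_{t,q',q}}$. You instead give a uniform domination argument in two stages: because the $\emission^j_{t-1,q'',q'}$ are pairwise distinct, any term with $k\ge 3$ is dominated by a feasible term with $k'\in\{1,2\}$ via the monotonicity $\alpha^{k'}_{t-1,q'',q'}\ge\alpha^{k}_{t-1,q'',q'}$; and because at most two labels are excluded for a fixed $(q'',k)$, the best admissible label always lies among the three most likely arc labels. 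Both proofs ultimately rest on the same two facts (a pigeonhole count of the excluded labels plus monotonicity of the sorted sequences $\alpha^k$ and $y_{t,\likeliest^j_{t,q',q}}$), but your version avoids the case enumeration entirely, makes the counting principle transparent, and would extend verbatim to $\app(t,q',q,i)$ for larger $i$ using the top $i+1$ labels; what the paper's enumeration buys in exchange is the list of explicit closed-form update rules that are implemented directly in Algorithm~\ref{alg:arc}. One small remark: the ``bookkeeping'' you defer at the end is in fact already discharged by your argument, since the feasible members of the top-three set are by definition exactly $\Gamma(t,k,q'',q',q)$ for $i=1$ and $\Gamma(t,k,q'',q',q)\setminus\{\emission^1_{t,q',q}\}$ for $i=2$, and the degenerate situations where these sets are empty coincide with those where the unrestricted feasible set is empty, so the zero convention applies on both sides.
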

The proof of Theorem \ref{theo:exactAlpha} can be found in the Appendix.
%%%%%%%%%%%%%%%%%%%%%%%%%%%%%%%%%%%%%%%%%%%%%%%%%%%%%%%%%%%%%%%%%%%%%%%%%%%%%
%%%%% Ende Theorem %%%%%%%%%%%%%%%%%%%%%%%%%%%%%%%%%%%%%%%%%%%%%%%%%%%%%%%%%
%%%%%%%%%%%%%%%%%%%%%%%%%%%%%%%%%%%%%%%%%%%%%%%%%%%%%%%%%%%%%%%%%%%%%%%%%%%%%
An equivalent statement using the same values as in Theorem \ref{theo:exactAlpha} for the calculation of the likelihood of consecutive identical labels calculates $\cont(t,q',q,i)$ as
\begin{align}
\begin{split}
\widetilde\cont(t,q',q,1) &= \max\Big\{\alpha^k_{t-1,q',q}y_{t,\emission^k_{t-1,q',q}} \:|\:k\in\{1,2\}:\\&\qquad\qquad \emission^k_{t-1,q',q}\in\{\likeliest_{t,q',q}^j | j=1,2,3\} \Big\}\\
\widetilde\cont(t,q',q,2) &= \max\Big\{\alpha^k_{t-1,q',q}y_{t,\emission^k_{t-1,q',q}} \:|\: k\in\{1,2\}: \emission^k_{t-1,q',q}\neq\emission^1_{t,q',q} \wedge \\ &\qquad\qquad\emission^k_{t-1,q',q}\in\{\likeliest_{t,q',q}^j | j=1,2,3\}\Big\}.
\end{split}\label{eq:approxapp}
\end{align}
Although $\widetilde\cont(t,q',q,i)\neq \cont(t,q',q,i)$ in general, the conditions given in Theorem \ref{theo:approxAlpha} are sufficient to ensure that this approximation does not influence the final probability.
%To state the following theorem, we need to introduce the term \emph{unimodal}. %The network output is unimodal with respect to a regular expression $\fett r$ if the most likely path with respect to $\fett r$ does not contain any subpath $\pi_r\dots\pi_{r+s}$ with length $s>2$ such that it emits the same label $a=\pi_t$ at all positions $ r\leq t\leq r+s$, at $r$ and $r+s$ $a$ is one of three top labels $\likeliest^1,\likeliest^2$ or $\likeliest^3$ but inbetween there is a position $ r< t< r+s$ where $a$ is not top three. If there is one dominant peek per character, the network output is unimodal.

\begin{theo}\label{theo:approxAlpha}
Let $\fett \pi^*$ be the most likely feasible path with respect to the regular expression $\fett r$. Assume the following conditions:
\begin{enumerate}
\item \label{cond:limitedContinues} $\forall t: (\pi_t^*,\dots,\pi_{t+n}^*)=a^n\in\albet^n\Rightarrow n\leq2$ (i.e. $\fett \pi^*$ contains at most 2 consecutive identical labels from $\albet$)
\item \label{cond:likelynac} $\forall t: |\{a\in\albet \::\:y_{t,a}>y_{t,\symbnac}\}|<3$ (the \nac{} is one of the three most likely labels at each position)
\end{enumerate}
Then, {there is a $q\in F$ such that $\alpha_{T,q',q}^1 = \p(\fett \pi^*|\fett X)$} if $\fett \alpha$ is calculated using \eqref{eq:approxapp} as substitution for \eqref{eq:cont}. %the approximation given by eq. \eqref{eq:approxapp} does not effect the final probability $\rho_{\overline N(\fett r)}(\fett Y)$.
\end{theo}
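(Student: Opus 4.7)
The plan is to proceed by strong induction on $t$, showing that along the optimal path $\fett\pi^*$ the approximated $\alpha$-scheme still reproduces the true prefix probability exactly. Denote by $(q'_t,q_t)$ the arc traversed by $\fett\pi^*$ at time $t$ and by $k_t$ the rank on $(q'_t,q_t)$ of the prefix $\fett\pi^*_{[1:t]}$. I would maintain the inductive hypothesis ``$\alpha^1_{t,q'_t,q_t}$ and $\alpha^2_{t,q'_t,q_t}$ as computed with $\widetilde\cont$ in place of $\cont$ agree with their exact values, and $k_t\in\{1,2\}$''. Instantiated at $t=T$, $k=1$, this yields the claim with $q=q_T\in F$ (by feasibility of $\fett\pi^*$). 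The base case $t=1$ is immediate since the initialisation $\alpha^i_{1,q_0,q}=y_{1,\likeliest^i_{1,q_0,q}}$ is unaffected by the approximation.

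For the inductive step at time $t$, I would split on whether $\fett\pi^*$ performs an \emph{app} or a \emph{cont} step. In the \emph{app} case, Theorem~\ref{theo:exactAlpha} applies directly: by the inductive hypothesis the predecessor arc supplies exact $\alpha^1,\alpha^2$, and the theorem then shows that $\app(t,q'_t,q_t,1)$ and $\app(t,q'_t,q_t,2)$ are computed exactly using only those two ranks and the three most likely labels on $(q'_t,q_t)$; this also yields $k_t\le2$. The \emph{cont} case splits further. If $\pi^*_{t-1}=\pi^*_t=\symbnac$, then $(q'_t,q_t)$ is a NaC-arc carrying the single label $\symbnac$, so only $k=1$ exists, $\symbnac$ trivially lies in the arc's top-3, and $\widetilde\cont$ coincides with $\cont$. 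If $\pi^*_{t-1}=\pi^*_t=a\in\albet$, condition~\ref{cond:limitedContinues} forces $\pi^*_{t-2}\ne a$, so the $t{-}1$ prefix was produced by an \emph{app} step at time $t-1$ and its $\alpha$-value is exact by the inductive hypothesis. It then remains to show (a) the rank on $(q'_t,q_t)$ of the prefix ending in $a$ at time $t-1$ is in $\{1,2\}$, and (b) $a$ lies among the three labels with largest $y_{t,\cdot}$ on $(q'_t,q_t)$.

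For (a) and (b) I would exploit global optimality of $\fett\pi^*$. Because every label on $(q'_t,q_t)$ leaves the automaton in the same state $q_t$, the continuation of $\fett\pi^*$ from time $t+1$ onwards depends on $\pi^*_t$ only through the forbidden-label constraint $\pi_{t+1}\ne\pi^*_t$. Swapping $\fett\pi^*_t=a$ for a higher-ranked ending label $\emission^j_{t-1,q'_t,q_t}$ of rank $j<k_t$ and re-optimising the tail yields the inequality $\alpha^{k_t}_{t-1,q'_t,q_t}\,y_{t,a}\ge\alpha^j_{t-1,q'_t,q_t}\,y_{t,\emission^j}$, whence (since $\alpha^j>\alpha^{k_t}$) $y_{t,a}>y_{t,\emission^j}$. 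Combined with condition~\ref{cond:likelynac}, which caps the number of characters of $\albet$ beating $y_{t,\symbnac}$ by two, this pins the rank down to $k_t\le 2$ and forces $a$ into the arc's top-3.

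The main obstacle is precisely this last step: lifting the \emph{global} condition~\ref{cond:likelynac} on $y_{t,\symbnac}$ to a statement about the \emph{per-arc} ranking $\likeliest^j_{t,q'_t,q_t}$ used by $\widetilde\cont$, especially for character-class arcs that aggregate many labels as in Example~\ref{ex:automaton}. The tail re-optimisation also needs care, because swapping the ending label at $t$ changes the forbidden label at $t+1$ and could in principle cost probability on the tail; here the additional fact that $\pi^*_{t+1}\ne a$ (again from condition~\ref{cond:limitedContinues}) is what keeps both tails comparable. Once (a) and (b) are established, $\widetilde\cont(t,q'_t,q_t,1)=\cont(t,q'_t,q_t,1)$ along $\fett\pi^*$, the inductive invariants are preserved, and setting $t=T$ finishes the proof.
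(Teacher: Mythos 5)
Your skeleton (induction along $\fett\pi^*$, app-steps handled by Theorem~\ref{theo:exactAlpha}, cont-steps split into the trivial \nac{} case and the character case via condition~\ref{cond:limitedContinues}) matches the structure of the paper's argument, and you have correctly isolated the two facts that must be proved: (a) the rank of the length-$(t{-}1)$ prefix is at most $2$, and (b) $\pi^*_t$ lies among the arc's three most likely labels. But the mechanism you propose for (a) and (b) — swapping the ending label $a$ for a higher-ranked ending label $\emission^j$ of another prefix on the same arc and re-optimising the tail — does not close the argument, and you say so yourself. The inequality $\alpha^{k_t}y_{t,a}\ge\alpha^{j}y_{t,\emission^j}$ compares full paths whose tails obey different forbidden-label constraints at $t+1$ (the relevant danger is $\pi^*_{t+1}=\emission^j$, not $\pi^*_{t+1}=a$), and even if granted it only tells you that $a$ beats the \emph{ending labels} $\emission^j$ of competing prefixes, which is not the same as placing $a$ among the $\likeliest^j_{t,q',q}$ or bounding the prefix rank by $2$. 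The step from ``condition~\ref{cond:likelynac} caps the characters beating \nac{} at two'' to ``$k_t\le2$ and $a$ is in the arc's top-3'' is exactly the obstacle you flag, and it is left open.

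The missing idea is a much simpler substitution: replace the offending label by $\symbnac$ itself. If $\pi^*_t=a$ with $\pi^*_{t-1}=a$ and $a\notin\{\likeliest^1_{t,q',q},\likeliest^2_{t,q',q},\likeliest^3_{t,q',q}\}$, then at least three labels readable on that arc beat $a$ at time $t$; by condition~\ref{cond:likelynac} at most two characters beat $\symbnac$, so $y_{t,\symbnac}>y_{t,a}$. Replacing $\pi^*_t$ by $\symbnac$ yields a path that is still accepted by the extended automaton (a \nac{} may always be inserted after a character, by construction of $\overline N$), collapses to the same word (the \nac{} is deleted by $\mathcal D$ and the remaining single $a$ at $t-1$ still emits the character), and is strictly more likely — contradicting optimality of $\fett\pi^*$. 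No tail re-optimisation is needed because the tail is reused verbatim. The same substitution applied at position $t-1$ rules out prefix rank $j>2$. This one local exchange argument is what the paper uses to deliver both (a) and (b), and it is precisely the step your proposal lacks.
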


Again, the proof can be found in the Appendix.

\begin{rem}\label{rem:Remarks2Approx}
Errors only appear for arcs reading more than 2 characters. We call these arcs \emph{critical}.

The conditions of Theorem \ref{theo:approxAlpha} are not unlikely to occur in Recurrent Neural Networks trained with CTC. The \nac{} is always very probable (condition % \ref{cond:sepnac} and
\ref{cond:likelynac}) and the likelihoods of other labels are often very spiky (condition \ref{cond:limitedContinues}) i.e. one rarely observes more than two consecutive identical labels in the best path except for the \nac. (In \cite{bluche2015} they call this {\it the dominance of blank predictions}.)
\end{rem}

\begin{rem}
Theorem \ref{theo:exactAlpha} and \ref{theo:approxAlpha} allow us to preselect the most likely channels per arc and position. The calculations of any arc can be reduced by calculating the probability of prefixes ending on the three most likely labels of the considered arc.
\end{rem}

The calculation of $\alpha_{t+1,q',q}^i$ requires
$\alpha_{t,q'',q'}^i$ for all $q''\in\raum P(q')$ and $\alpha_{t,q',q}^i$. Thus,
there are two possible chronological orders to calculate the $\alpha_{t,q',q}^i$:
\begin{enumerate}
\item Fix $t$ and calculate $\alpha_{t,q',q}^i$ starting at $q_0$ before moving
on to $t+1$.
\item Fix $(q',q)$ and calculate $\alpha_{t,q',q}^i$ for all $t$ before moving on to
the successor states.
\end{enumerate}
We suggest the second variant mainly because of computational reasons. Finishing the calculation of
one state allows to keep the necessary values in the cache and promises a fast calculation.
However, we did not test the first variant. The downside of the second variant is that we must not allow circles
of length greater than one for the automaton $N$ (which results is circles of length
2 in the extended automaton). Otherwise, we would require information of subsequent (not yet calculated)
arcs. This restriction forbids to use the Kleene star operator in any regular expression. To allow at least the Kleene star for single characters or character groups, we calculate all transitions depicted in Figure \ref{fig:substitution} at once whenever $q_1=q_2$.% (i.e. $(q_1,q_1)$ represents something like {\tt[0-9]*}).

The Algorithms \ref{alg:regex} and \ref{alg:arc} show the pseudo code of the proposed algorithm.

\IncMargin{1em}
\begin{algorithm}
\SetKwInOut{Input}{input}\SetKwInOut{Output}{output}
\Input{Network output $\fett Y$, regular expression $\fett r$}
\Output{Likelihood $p=\p(\fett\pi^*(\fett r)|\fett X)$}
\BlankLine
\emph{$N \leftarrow$ createNFA($r$)}\tcp*[r]{Thompson's Construction Algorithm}
\emph{$\overline N \leftarrow$ extendAutomaton($N$)}\tcp*[r]{Algorithm \ref{alg:extAutomaton}}
\For{$q \in \mathrm {Successor}(q_0)$}{
calculate $ (\alpha_{t,q_0, q}^{1/2})_{t=1}^T$ %calculateArc($\fett \alpha,\fett \emission, \overline N$)
\tcp*[r]{Algorithm \ref{alg:arc}}
}
$p = 0$\;
\For{$q\in F$}{
\lForEach{$q'\in\raum P(q)$}{$p\leftarrow \max\{p,\alpha_{T,q',q}^1\}$}
}
\caption{RegExDecoder}\label{alg:regex}
\end{algorithm}\DecMargin{1em}

\IncMargin{1em}
\begin{algorithm}
\SetKwInOut{Input}{input}\SetKwInOut{Output}{output}
\Input{likelihoods $\fett\alpha$, ending labels $\fett\emission$, extended NFA $\overline N$ }
\BlankLine
\If{all $\alpha_{t,q'',q'}^i$ with $q''\in\raum P(q')$ are calculated} {
\For{$i\leftarrow 1$ \KwTo $2$}{
\eIf{$q'==q_0$}{
$\alpha^i_{1,q',q}\leftarrow y_{1,\likeliest^i_{1,q',q}}$\;
$\emission^i_{1,q',q}\leftarrow\likeliest^i_{1,q',q}$\;
}{
$\alpha^i_{1,q',q}\leftarrow 0$\;
}
}
\For{$t\leftarrow 2$ \KwTo $T$}{
$\app^1_t\leftarrow 0$\;
$\cont^1_t\leftarrow 0$\;
\For{$k=1$ \KwTo$2$}{
\For{$j=1$ \KwTo$3$}{
\For{$q''\in\raum P(q')$}{
\If{$\emission^k_{t-1,q'',q'}\neq\likeliest_{t,q',q}^j $}{
$\app^1_t\leftarrow \max\left\{\app^1_t\:,\:\alpha^k_{t-1,q'',q'}y_{t,\likeliest_{t,q',q}^j} \right\}$\;
}
}
\If{$\emission^k_{t-1,q',q}== \gamma^j_{t,q',q}$ }{
$\cont^1_{t}\leftarrow\max\left\{\cont^1_t\:,\:\alpha^k_{t-1,q',q}y_{t,\gamma^j_{t,q',q}}\right\}$\;
}
}
}
$\alpha^1_{t,q',q}\leftarrow\max\{\app^1_t,\cont^1_{t}\}$\;
\emph{$\mathrm {update}(\emission^1_{t,q',q})$}\tcp*[r]{set $\emission^1_{t,q',q}$ to the maximizing $\gamma^i_{t,q',q}$}
\tcp{calculate $\alpha_{t,q',q}^2$ analogously with the additional constraint not to read $\emission^1_{t,q',q}$}
}
\lForEach{$\hat q \in \mathrm {Successor}(q)$}{calculate $(\alpha^{1/2}_{t,q,\hat q})_{t=1}^T$}
}

\caption{calculate $(\alpha_{t,q', q}^{1/2})_{t=1}^T$}\label{alg:arc}
\end{algorithm}\DecMargin{1em}
\subsubsection*{Capturing Groups}
As already mentioned, information about a part of the regular expression
can be crucial. In case of keyword spotting for example, the likelihood of the keyword
determines whether or not the current spot is accepted. But also the likelihood of labels next to the keyword are important to decide whether or not the spotted word is only a part of a larger word. To connect parts of the regular expression with parts of the automaton, we take advantage of the notation of capturing groups:

A \emph{capturing group} $g$ of a regular expression $\fett r$ is a consecutive part within
a pair of parentheses. Thus, the group is related to certain arcs of the automaton. Hence, only if the most likely path related to $\fett r$ makes use of any arc related to $g$, $g$ captures some part of the current output $\fett Y$.
Then, the captured label sequence is the part of most likely path $\fett \pi^*(\fett r)$ read by the subautomaton related to $g$.
In a straight forward way, one calculates the probability or the bounds (start and end position) of $g$ according to $\fett \pi^*(\fett r)$.\footnote{Since the \nac{} is not part of the regular expression, one may decide whether or not
the likelihood calculation and the optimal path include the starting and tailing \nac -labels. }

\subsubsection*{Vocabularies}\label{sec:vocabularies}
Typically, a decoding process includes one or more vocabularies. The regular expression of such a vocabulary can be
expressed as an alternation of words. The optimal automaton accepting a collection of words
is well know: The \emph{deterministic,
acyclic finite state automata (DAFSA)}. There are very efficient algorithms
for a constructing a corresponding minimal DAFSA (see \cite{daciuk2000incremental}). The number
of arcs decreases dramatically compared to alternating the vocabulary words na\"ively.

Nevertheless, the number of arcs increases strongly for large vocabularies such that a fast and effective
decoding process is impossible.

\section{Experiments}\label{sec:experiments}
The aim of this section is to show that the decoding works properly and fast.
We show that the Algorithms \ref{alg:regex} and \ref{alg:arc} work correctly in practical applications and analyze situations when it fails. We compare our approximation of eq. \eqref{eq:approxapp} with the exact most likely path. Further applications of the RegEx-Decoder can be found in \cite{strauss2014citlab} and \cite{leifert2014citlab}.

We did all time statistics on a laptop with Intel i7-4940MX 3.10GHz CPU, 32GB RAM and SSD.

\subsection{Text recognition}\label{subsec:htrts}
First, we show that our approximation is reasonable for practical applications such as the HTRtS competition from the ICFHR2014 (see \cite{htrts2014}). The data consists of 400 handwritten pages. We train on 350 pages and validate on 50 pages. The validation set is also used to evaluate the decoding. Each page consists of several lines of text including words, punctuations, numbers and symbols. The neural network used in \cite{strauss2014citlab} generates the output matrices. We compare the most likely word of a vocabulary obtained by the RegEx-Decoder\footnote{The automaton is generated using the strategy of Section \ref{sec:vocabularies}.} with the result of the string-by-string decoding from Section \ref{sec:decoding}. For this purpose, the RegEx-Decoder is used to splits these matrices into regions of words and region containing spaces, numbers etc. The evaluation is done on the resulting 4657 submatrices representing the word regions. These matrices correspond to outputs of subimages of single words.
We use two vocabularies: one containing 9273 words (generated from HTRtS data) and one containing 21698 words (a modern, general vocabulary made from two million English sentences from \url{http://corpora.uni-leipzig.de/}). %The regular expressions in this experiment contain only the vocabulary and the automaton is created like in Subsection \ref{sec:vocabularies}.

\begin{table}
\caption{Statistics over the text recognition experiment: ``size'' denotes the number of words of
the vocabulary, ``\# arcs'' denotes the number of arcs in automaton and ``\# critical arcs''
denotes the number of arcs which read more than 2 labels. ``greatest deviation - absolute'' denotes the difference
between the exact negative logarithmic probability and the result of the RegEx-Decoder.
The ``greatest deviation - relative'' is the deviation divided by the exact absolute logarithmic probability.}\label{tab:htrts}
\begin{center}
\begin{tabular}{rccccc}
\hline\hline
vocabulary & size & \multicolumn{2}{c}{arcs}& \multicolumn{2}{c}{greatest deviation}\\
&&\# total & \# critical & absolute & relative \\\hline
HTRtS & 9273 & 12398 & 12 & 9.95E-14 & 2.1E-12\\
general English & 21698 & 25997 & 32 & 9.95E-14 & 2.1E-12 \\\hline\hline
\end{tabular}
\end{center}
\end{table}

Table \ref{tab:htrts} shows the deviation of the negative logarithmic likelihood of the RegEx-Decoder and the exact decoding. Clearly, the deviation is negligible. There is an intersection of both vocabularies which includes especially the most frequent words. Thus, it is not surprising that both vocabularies show the same deviation since both extrema (the greatest absolute and relative deviation) appear for the same words (``General'' and ``of''). Since the number of critical arcs is very small, we expected a small divergence due to our approximation. In fact there is \emph{no additional confusion of words} because of our approximation. %Similar experiments were done for the ANWRESH14 dataset with similar results.
Thus, the experiment shows that the approximation of Theorem \ref{theo:approxAlpha} can be applied in practical applications with few critical arcs.

We evaluated the impact of the decoder empirically on the HTRtS15 test set. We decreased the word error rate by 3 percentage points compared to the best path decoding of the entire line (from 50.89\% to 48.06\%) just by defining an appropriate regular expression for the expected line structure without any vocabulary. Including a vocabulary, we further decreased the WER to 33.90\%.

\subsection{Number recognition}
The next experiment involves artificially generated writings and investigates the correctness of Alg. \ref{alg:regex} in case of a relatively large number of critical arcs.
By Remark \ref{rem:Remarks2Approx}, we know that errors only appear for arcs reading more than two labels. We enforce this condition by searching only for digits. Thus, every arc not reading \nac{}s is critical since these arcs read more than two labels. To enforce further continuation errors\footnote{Remember that errors only happen while calculating $\cont(t,q',q,i)$.}, we vary the number of digits actually depicted in the image while the search pattern remains 3 to 5 digits (i.e. the regular expression is {\tt [0-9]\{3,5\}}). If the number of digits is greater than 5, the decoder has to suppress emissions which also promotes errors. %In practical applications, this should not appear since the regular expression is forced to decode a pattern not contained in the image.

We vary the number of digits from 4 to 9. For each number of digits, we generate 10,000 synthetic writings.
The digits are narrowly written to enforce further confusions (see Figure \ref{fig:digits}). The resulting images work as input to four neural networks with different number recognition expertise. We will compare the decoding results over the output matrices generated by these networks. The RegEx-Decoder searches in these output matrices for the most likely number with 3 to 5 digits. The resulting number and probability is compared with the most likely number resulting from a traditional string-by-string decoding as in Section \ref{sec:decoding} using a vocabulary of all numbers with 3 to 5 digits. Any difference in the resulting optimal path (but not its probability) is regarded as an error.

\begin{figure}
\includegraphics[width=5cm]{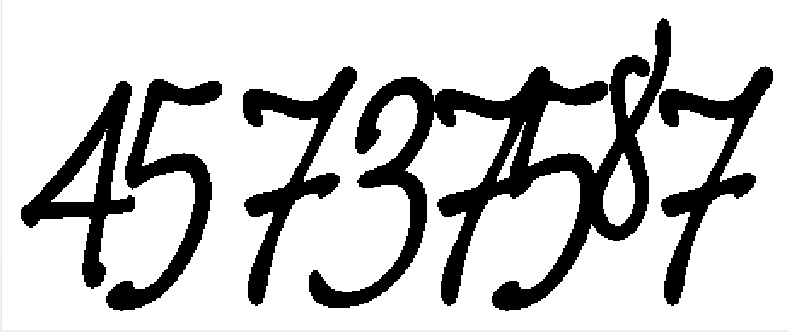}\hfill
\includegraphics[width=7cm]{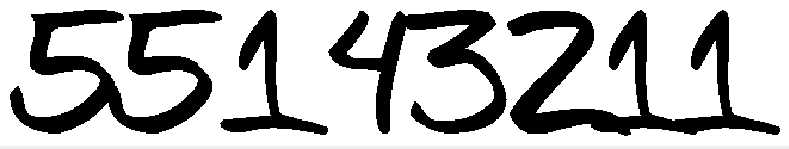}
\caption{Artificial writings of two numbers for the number recognition task.}\label{fig:digits}
\end{figure}

\begin{table}
\begin{center}
\caption{Number recognition task: Number of differences in the most likely paths of the RegEx-Decoder and the exact decoding for different neural nets and different number of digits in the image but constant regular expression of {\tt [0-9]\{3,5\}}.
}\label{tab:digits}
\begin{tabular}{lcccccc}
\hline\hline
&4 &5 &6 &7 &8 &9\\\hline
net1 &0 &0 &1 &12 &9 &27\\
net2 &0 &0 &24 &27 &40 &40\\
net3 &0 &0 &6 &3 &4 &4\\
net4 &0 &1 &4 &4 &7 &7\\\hline\hline
\end{tabular}
\end{center}
\end{table}

Table \ref{tab:digits} shows the errors per network
and digits in the image. The more the algorithm is forced to suppress digits the more errors occur. For 4 and 5 digits there is no force to suppress any written digit since the corresponding automaton is allowed to accept the ground truth. The errors are negligible in this case. However, although there are almost no errors in the resulting path, there are small differences between the probability of the string-by-string decoding and the RegEx-Decoder.
From 6 to 9, digits there are already significantly many errors.

Even if there is a relatively high number of critical arcs, there will be only little error if the regular expression fits to the image content. If it does not fit to the number of digits in the image there will be a high risk of generating additional confusion errors because of our approximation.
However, even under exact decoding the best feasible path then has a very low probability which can only by further underestimated by the approximation. Hence the approximation will likely not be harmful here since the decoding process result can either be rejected immediately or it is unlikely to be of any significance in downstream processing steps.

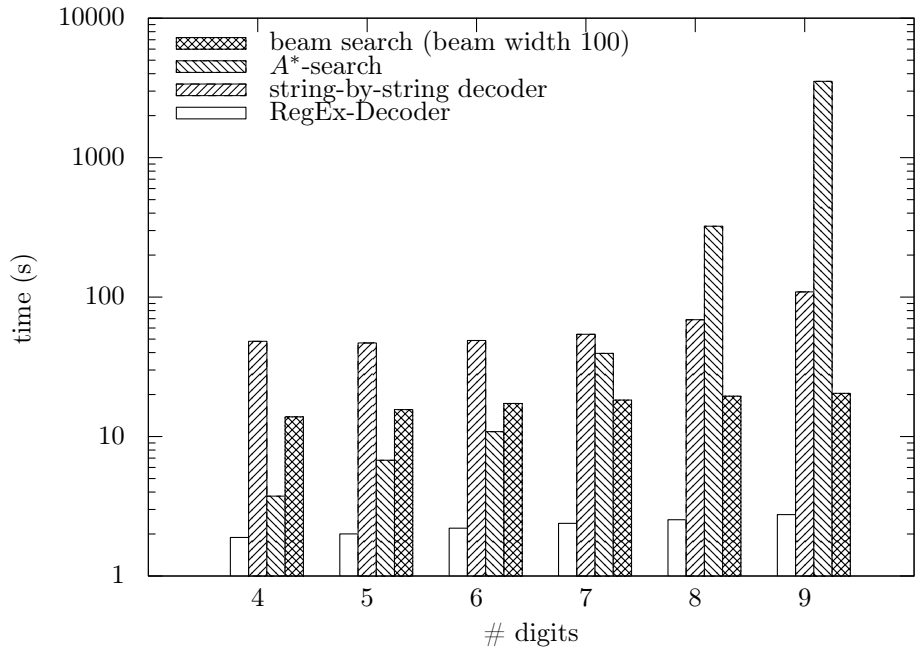
\begin{figure}
\begin{center}

\begin{tikzpicture}[gnuplot]
%% generated with GNUPLOT 4.6p5 (Lua 5.2; terminal rev. 99, script rev. 100)
%% Mo 15 Feb 2016 23:15:40 CET
\path (0.000,0.000) rectangle (12.500,8.750);
\gpcolor{color=gp lt color border}
\gpsetlinetype{gp lt border}
\gpsetlinewidth{1.00}
\draw[gp path] (1.872,0.985)--(2.052,0.985);
\draw[gp path] (11.947,0.985)--(11.767,0.985);
\node[gp node right] at (1.688,0.985) { 1};
\draw[gp path] (1.872,1.542)--(1.962,1.542);
\draw[gp path] (11.947,1.542)--(11.857,1.542);
\draw[gp path] (1.872,1.867)--(1.962,1.867);
\draw[gp path] (11.947,1.867)--(11.857,1.867);
\draw[gp path] (1.872,2.098)--(1.962,2.098);
\draw[gp path] (11.947,2.098)--(11.857,2.098);
\draw[gp path] (1.872,2.277)--(1.962,2.277);
\draw[gp path] (11.947,2.277)--(11.857,2.277);
\draw[gp path] (1.872,2.424)--(1.962,2.424);
\draw[gp path] (11.947,2.424)--(11.857,2.424);
\draw[gp path] (1.872,2.548)--(1.962,2.548);
\draw[gp path] (11.947,2.548)--(11.857,2.548);
\draw[gp path] (1.872,2.655)--(1.962,2.655);
\draw[gp path] (11.947,2.655)--(11.857,2.655);
\draw[gp path] (1.872,2.749)--(1.962,2.749);
\draw[gp path] (11.947,2.749)--(11.857,2.749);
\draw[gp path] (1.872,2.834)--(2.052,2.834);
\draw[gp path] (11.947,2.834)--(11.767,2.834);
\node[gp node right] at (1.688,2.834) { 10};
\draw[gp path] (1.872,3.391)--(1.962,3.391);
\draw[gp path] (11.947,3.391)--(11.857,3.391);
\draw[gp path] (1.872,3.716)--(1.962,3.716);
\draw[gp path] (11.947,3.716)--(11.857,3.716);
\draw[gp path] (1.872,3.947)--(1.962,3.947);
\draw[gp path] (11.947,3.947)--(11.857,3.947);
\draw[gp path] (1.872,4.126)--(1.962,4.126);
\draw[gp path] (11.947,4.126)--(11.857,4.126);
\draw[gp path] (1.872,4.273)--(1.962,4.273);
\draw[gp path] (11.947,4.273)--(11.857,4.273);
\draw[gp path] (1.872,4.397)--(1.962,4.397);
\draw[gp path] (11.947,4.397)--(11.857,4.397);
\draw[gp path] (1.872,4.504)--(1.962,4.504);
\draw[gp path] (11.947,4.504)--(11.857,4.504);
\draw[gp path] (1.872,4.598)--(1.962,4.598);
\draw[gp path] (11.947,4.598)--(11.857,4.598);
\draw[gp path] (1.872,4.683)--(2.052,4.683);
\draw[gp path] (11.947,4.683)--(11.767,4.683);
\node[gp node right] at (1.688,4.683) { 100};
\draw[gp path] (1.872,5.240)--(1.962,5.240);
\draw[gp path] (11.947,5.240)--(11.857,5.240);
\draw[gp path] (1.872,5.565)--(1.962,5.565);
\draw[gp path] (11.947,5.565)--(11.857,5.565);
\draw[gp path] (1.872,5.796)--(1.962,5.796);
\draw[gp path] (11.947,5.796)--(11.857,5.796);
\draw[gp path] (1.872,5.975)--(1.962,5.975);
\draw[gp path] (11.947,5.975)--(11.857,5.975);
\draw[gp path] (1.872,6.122)--(1.962,6.122);
\draw[gp path] (11.947,6.122)--(11.857,6.122);
\draw[gp path] (1.872,6.246)--(1.962,6.246);
\draw[gp path] (11.947,6.246)--(11.857,6.246);
\draw[gp path] (1.872,6.353)--(1.962,6.353);
\draw[gp path] (11.947,6.353)--(11.857,6.353);
\draw[gp path] (1.872,6.447)--(1.962,6.447);
\draw[gp path] (11.947,6.447)--(11.857,6.447);
\draw[gp path] (1.872,6.532)--(2.052,6.532);
\draw[gp path] (11.947,6.532)--(11.767,6.532);
\node[gp node right] at (1.688,6.532) { 1000};
\draw[gp path] (1.872,7.089)--(1.962,7.089);
\draw[gp path] (11.947,7.089)--(11.857,7.089);
\draw[gp path] (1.872,7.414)--(1.962,7.414);
\draw[gp path] (11.947,7.414)--(11.857,7.414);
\draw[gp path] (1.872,7.645)--(1.962,7.645);
\draw[gp path] (11.947,7.645)--(11.857,7.645);
\draw[gp path] (1.872,7.824)--(1.962,7.824);
\draw[gp path] (11.947,7.824)--(11.857,7.824);
\draw[gp path] (1.872,7.971)--(1.962,7.971);
\draw[gp path] (11.947,7.971)--(11.857,7.971);
\draw[gp path] (1.872,8.095)--(1.962,8.095);
\draw[gp path] (11.947,8.095)--(11.857,8.095);
\draw[gp path] (1.872,8.202)--(1.962,8.202);
\draw[gp path] (11.947,8.202)--(11.857,8.202);
\draw[gp path] (1.872,8.296)--(1.962,8.296);
\draw[gp path] (11.947,8.296)--(11.857,8.296);
\draw[gp path] (1.872,8.381)--(2.052,8.381);
\draw[gp path] (11.947,8.381)--(11.767,8.381);
\node[gp node right] at (1.688,8.381) { 10000};
\draw[gp path] (3.311,0.985)--(3.311,1.165);
\draw[gp path] (3.311,8.381)--(3.311,8.201);
\node[gp node center] at (3.311,0.677) {4};
\draw[gp path] (4.751,0.985)--(4.751,1.165);
\draw[gp path] (4.751,8.381)--(4.751,8.201);
\node[gp node center] at (4.751,0.677) {5};
\draw[gp path] (6.190,0.985)--(6.190,1.165);
\draw[gp path] (6.190,8.381)--(6.190,8.201);
\node[gp node center] at (6.190,0.677) {6};
\draw[gp path] (7.629,0.985)--(7.629,1.165);
\draw[gp path] (7.629,8.381)--(7.629,8.201);
\node[gp node center] at (7.629,0.677) {7};
\draw[gp path] (9.068,0.985)--(9.068,1.165);
\draw[gp path] (9.068,8.381)--(9.068,8.201);
\node[gp node center] at (9.068,0.677) {8};
\draw[gp path] (10.508,0.985)--(10.508,1.165);
\draw[gp path] (10.508,8.381)--(10.508,8.201);
\node[gp node center] at (10.508,0.677) {9};
\draw[gp path] (1.872,8.381)--(1.872,0.985)--(11.947,0.985)--(11.947,8.381)--cycle;
\node[gp node center,rotate=-270] at (0.246,4.683) {time (s)};
\node[gp node center] at (6.909,0.215) {\# digits};
\node[gp node left] at (3.340,7.123) {RegEx-Decoder};
\def\gpfillpath{(2.240,7.046)--(3.156,7.046)--(3.156,7.200)--(2.240,7.200)--cycle}
\gpfill{color=gpbgfillcolor} \gpfillpath;
\gpfill{rgb color={0.000,0.000,0.000},gp pattern 0,pattern color=.} \gpfillpath;
\gpcolor{rgb color={0.000,0.000,0.000}}
\gpsetlinetype{gp lt plot 0}
\draw[gp path] (2.240,7.046)--(3.156,7.046)--(3.156,7.200)--(2.240,7.200)--cycle;
\def\gpfillpath{(2.951,0.985)--(3.192,0.985)--(3.192,1.497)--(2.951,1.497)--cycle}
\gpfill{color=gpbgfillcolor} \gpfillpath;
\gpfill{rgb color={0.000,0.000,0.000},gp pattern 0,pattern color=.} \gpfillpath;
\draw[gp path] (2.951,0.985)--(2.951,1.496)--(3.191,1.496)--(3.191,0.985)--cycle;
\def\gpfillpath{(4.391,0.985)--(4.632,0.985)--(4.632,1.544)--(4.391,1.544)--cycle}
\gpfill{color=gpbgfillcolor} \gpfillpath;
\gpfill{rgb color={0.000,0.000,0.000},gp pattern 0,pattern color=.} \gpfillpath;
\draw[gp path] (4.391,0.985)--(4.391,1.543)--(4.631,1.543)--(4.631,0.985)--cycle;
\def\gpfillpath{(5.830,0.985)--(6.071,0.985)--(6.071,1.620)--(5.830,1.620)--cycle}
\gpfill{color=gpbgfillcolor} \gpfillpath;
\gpfill{rgb color={0.000,0.000,0.000},gp pattern 0,pattern color=.} \gpfillpath;
\draw[gp path] (5.830,0.985)--(5.830,1.619)--(6.070,1.619)--(6.070,0.985)--cycle;
\def\gpfillpath{(7.269,0.985)--(7.510,0.985)--(7.510,1.684)--(7.269,1.684)--cycle}
\gpfill{color=gpbgfillcolor} \gpfillpath;
\gpfill{rgb color={0.000,0.000,0.000},gp pattern 0,pattern color=.} \gpfillpath;
\draw[gp path] (7.269,0.985)--(7.269,1.683)--(7.509,1.683)--(7.509,0.985)--cycle;
\def\gpfillpath{(8.709,0.985)--(8.949,0.985)--(8.949,1.732)--(8.709,1.732)--cycle}
\gpfill{color=gpbgfillcolor} \gpfillpath;
\gpfill{rgb color={0.000,0.000,0.000},gp pattern 0,pattern color=.} \gpfillpath;
\draw[gp path] (8.709,0.985)--(8.709,1.731)--(8.948,1.731)--(8.948,0.985)--cycle;
\def\gpfillpath{(10.148,0.985)--(10.389,0.985)--(10.389,1.800)--(10.148,1.800)--cycle}
\gpfill{color=gpbgfillcolor} \gpfillpath;
\gpfill{rgb color={0.000,0.000,0.000},gp pattern 0,pattern color=.} \gpfillpath;
\draw[gp path] (10.148,0.985)--(10.148,1.799)--(10.388,1.799)--(10.388,0.985)--cycle;
\gpcolor{color=gp lt color border}
\node[gp node left] at (3.340,7.431) {string-by-string decoder};
\def\gpfillpath{(2.240,7.354)--(3.156,7.354)--(3.156,7.508)--(2.240,7.508)--cycle}
\gpfill{color=gpbgfillcolor} \gpfillpath;
\gpfill{rgb color={0.000,0.000,0.000},gp pattern 1,pattern color=.} \gpfillpath;
\gpcolor{rgb color={0.000,0.000,0.000}}
\gpsetlinetype{gp lt plot 1}
\draw[gp path] (2.240,7.354)--(3.156,7.354)--(3.156,7.508)--(2.240,7.508)--cycle;
\def\gpfillpath{(3.191,0.985)--(3.432,0.985)--(3.432,4.097)--(3.191,4.097)--cycle}
\gpfill{color=gpbgfillcolor} \gpfillpath;
\gpfill{rgb color={0.000,0.000,0.000},gp pattern 1,pattern color=.} \gpfillpath;
\draw[gp path] (3.191,0.985)--(3.191,4.096)--(3.431,4.096)--(3.431,0.985)--cycle;
\def\gpfillpath{(4.631,0.985)--(4.872,0.985)--(4.872,4.075)--(4.631,4.075)--cycle}
\gpfill{color=gpbgfillcolor} \gpfillpath;
\gpfill{rgb color={0.000,0.000,0.000},gp pattern 1,pattern color=.} \gpfillpath;
\draw[gp path] (4.631,0.985)--(4.631,4.074)--(4.871,4.074)--(4.871,0.985)--cycle;
\def\gpfillpath{(6.070,0.985)--(6.311,0.985)--(6.311,4.108)--(6.070,4.108)--cycle}
\gpfill{color=gpbgfillcolor} \gpfillpath;
\gpfill{rgb color={0.000,0.000,0.000},gp pattern 1,pattern color=.} \gpfillpath;
\draw[gp path] (6.070,0.985)--(6.070,4.107)--(6.310,4.107)--(6.310,0.985)--cycle;
\def\gpfillpath{(7.509,0.985)--(7.750,0.985)--(7.750,4.190)--(7.509,4.190)--cycle}
\gpfill{color=gpbgfillcolor} \gpfillpath;
\gpfill{rgb color={0.000,0.000,0.000},gp pattern 1,pattern color=.} \gpfillpath;
\draw[gp path] (7.509,0.985)--(7.509,4.189)--(7.749,4.189)--(7.749,0.985)--cycle;
\def\gpfillpath{(8.948,0.985)--(9.189,0.985)--(9.189,4.383)--(8.948,4.383)--cycle}
\gpfill{color=gpbgfillcolor} \gpfillpath;
\gpfill{rgb color={0.000,0.000,0.000},gp pattern 1,pattern color=.} \gpfillpath;
\draw[gp path] (8.948,0.985)--(8.948,4.382)--(9.188,4.382)--(9.188,0.985)--cycle;
\def\gpfillpath{(10.388,0.985)--(10.629,0.985)--(10.629,4.752)--(10.388,4.752)--cycle}
\gpfill{color=gpbgfillcolor} \gpfillpath;
\gpfill{rgb color={0.000,0.000,0.000},gp pattern 1,pattern color=.} \gpfillpath;
\draw[gp path] (10.388,0.985)--(10.388,4.751)--(10.628,4.751)--(10.628,0.985)--cycle;
\gpcolor{color=gp lt color border}
\node[gp node left] at (3.340,7.739) {$A^*$-search};
\def\gpfillpath{(2.240,7.662)--(3.156,7.662)--(3.156,7.816)--(2.240,7.816)--cycle}
\gpfill{color=gpbgfillcolor} \gpfillpath;
\gpfill{rgb color={0.000,0.000,0.000},gp pattern 2,pattern color=.} \gpfillpath;
\gpcolor{rgb color={0.000,0.000,0.000}}
\gpsetlinetype{gp lt plot 2}
\draw[gp path] (2.240,7.662)--(3.156,7.662)--(3.156,7.816)--(2.240,7.816)--cycle;
\def\gpfillpath{(3.431,0.985)--(3.672,0.985)--(3.672,2.044)--(3.431,2.044)--cycle}
\gpfill{color=gpbgfillcolor} \gpfillpath;
\gpfill{rgb color={0.000,0.000,0.000},gp pattern 2,pattern color=.} \gpfillpath;
\draw[gp path] (3.431,0.985)--(3.431,2.043)--(3.671,2.043)--(3.671,0.985)--cycle;
\def\gpfillpath{(4.871,0.985)--(5.111,0.985)--(5.111,2.520)--(4.871,2.520)--cycle}
\gpfill{color=gpbgfillcolor} \gpfillpath;
\gpfill{rgb color={0.000,0.000,0.000},gp pattern 2,pattern color=.} \gpfillpath;
\draw[gp path] (4.871,0.985)--(4.871,2.519)--(5.110,2.519)--(5.110,0.985)--cycle;
\def\gpfillpath{(6.310,0.985)--(6.551,0.985)--(6.551,2.899)--(6.310,2.899)--cycle}
\gpfill{color=gpbgfillcolor} \gpfillpath;
\gpfill{rgb color={0.000,0.000,0.000},gp pattern 2,pattern color=.} \gpfillpath;
\draw[gp path] (6.310,0.985)--(6.310,2.898)--(6.550,2.898)--(6.550,0.985)--cycle;
\def\gpfillpath{(7.749,0.985)--(7.990,0.985)--(7.990,3.937)--(7.749,3.937)--cycle}
\gpfill{color=gpbgfillcolor} \gpfillpath;
\gpfill{rgb color={0.000,0.000,0.000},gp pattern 2,pattern color=.} \gpfillpath;
\draw[gp path] (7.749,0.985)--(7.749,3.936)--(7.989,3.936)--(7.989,0.985)--cycle;
\def\gpfillpath{(9.188,0.985)--(9.429,0.985)--(9.429,5.623)--(9.188,5.623)--cycle}
\gpfill{color=gpbgfillcolor} \gpfillpath;
\gpfill{rgb color={0.000,0.000,0.000},gp pattern 2,pattern color=.} \gpfillpath;
\draw[gp path] (9.188,0.985)--(9.188,5.622)--(9.428,5.622)--(9.428,0.985)--cycle;
\def\gpfillpath{(10.628,0.985)--(10.869,0.985)--(10.869,7.544)--(10.628,7.544)--cycle}
\gpfill{color=gpbgfillcolor} \gpfillpath;
\gpfill{rgb color={0.000,0.000,0.000},gp pattern 2,pattern color=.} \gpfillpath;
\draw[gp path] (10.628,0.985)--(10.628,7.543)--(10.868,7.543)--(10.868,0.985)--cycle;
\gpcolor{color=gp lt color border}
\node[gp node left] at (3.340,8.047) {beam search (beam width 100)};
\def\gpfillpath{(2.240,7.970)--(3.156,7.970)--(3.156,8.124)--(2.240,8.124)--cycle}
\gpfill{color=gpbgfillcolor} \gpfillpath;
\gpfill{rgb color={0.000,0.000,0.000},gp pattern 3,pattern color=.} \gpfillpath;
\gpcolor{rgb color={0.000,0.000,0.000}}
\gpsetlinetype{gp lt plot 3}
\draw[gp path] (2.240,7.970)--(3.156,7.970)--(3.156,8.124)--(2.240,8.124)--cycle;
\def\gpfillpath{(3.671,0.985)--(3.912,0.985)--(3.912,3.095)--(3.671,3.095)--cycle}
\gpfill{color=gpbgfillcolor} \gpfillpath;
\gpfill{rgb color={0.000,0.000,0.000},gp pattern 3,pattern color=.} \gpfillpath;
\draw[gp path] (3.671,0.985)--(3.671,3.094)--(3.911,3.094)--(3.911,0.985)--cycle;
\def\gpfillpath{(5.110,0.985)--(5.351,0.985)--(5.351,3.191)--(5.110,3.191)--cycle}
\gpfill{color=gpbgfillcolor} \gpfillpath;
\gpfill{rgb color={0.000,0.000,0.000},gp pattern 3,pattern color=.} \gpfillpath;
\draw[gp path] (5.110,0.985)--(5.110,3.190)--(5.350,3.190)--(5.350,0.985)--cycle;
\def\gpfillpath{(6.550,0.985)--(6.791,0.985)--(6.791,3.273)--(6.550,3.273)--cycle}
\gpfill{color=gpbgfillcolor} \gpfillpath;
\gpfill{rgb color={0.000,0.000,0.000},gp pattern 3,pattern color=.} \gpfillpath;
\draw[gp path] (6.550,0.985)--(6.550,3.272)--(6.790,3.272)--(6.790,0.985)--cycle;
\def\gpfillpath{(7.989,0.985)--(8.230,0.985)--(8.230,3.318)--(7.989,3.318)--cycle}
\gpfill{color=gpbgfillcolor} \gpfillpath;
\gpfill{rgb color={0.000,0.000,0.000},gp pattern 3,pattern color=.} \gpfillpath;
\draw[gp path] (7.989,0.985)--(7.989,3.317)--(8.229,3.317)--(8.229,0.985)--cycle;
\def\gpfillpath{(9.428,0.985)--(9.669,0.985)--(9.669,3.370)--(9.428,3.370)--cycle}
\gpfill{color=gpbgfillcolor} \gpfillpath;
\gpfill{rgb color={0.000,0.000,0.000},gp pattern 3,pattern color=.} \gpfillpath;
\draw[gp path] (9.428,0.985)--(9.428,3.369)--(9.668,3.369)--(9.668,0.985)--cycle;
\def\gpfillpath{(10.868,0.985)--(11.108,0.985)--(11.108,3.407)--(10.868,3.407)--cycle}
\gpfill{color=gpbgfillcolor} \gpfillpath;
\gpfill{rgb color={0.000,0.000,0.000},gp pattern 3,pattern color=.} \gpfillpath;
\draw[gp path] (10.868,0.985)--(10.868,3.406)--(11.107,3.406)--(11.107,0.985)--cycle;
\gpcolor{color=gp lt color border}
\gpsetlinetype{gp lt border}
\draw[gp path] (1.872,8.381)--(1.872,0.985)--(11.947,0.985)--(11.947,8.381)--cycle;
%% coordinates of the plot area
\gpdefrectangularnode{gp plot 1}{\pgfpoint{1.872cm}{0.985cm}}{\pgfpoint{11.947cm}{8.381cm}}
\end{tikzpicture}%% gnuplot variables
\end{center}
\caption{Decoding times for the number recognition task (10,000 output matrices) averaged over all four networks.}
\label{fig:time}
\end{figure}

Figure \ref{fig:time} shows the required decoding time for the above network outputs and regular expression. The RegEx-Decoder needs between 0.19 ms and 0.28 ms per network output on average. The conventional string-by-string decoding needs at least 4.68 ms per network output since it has to calculate the probabilities of more or less all numbers with the specific number of digits under consideration. To speed up the decoding time, this decoding method reuses already calculated probabilities whenever the beginnings are the same\footnote{I.e. {$12345$} and {$12346$} share all probabilities for the prefix {$1234$}.}. Additionally, it stops the calculation of paths if the probability falls below the best yet found match. Even with this speed-up mechanism the RegEx-decoder is more than 22 times faster. The running time for the $A^*$-search is growing exponentially as expected but the results match perfectly those of the string-by-string decoding.

The beam search with beam value 100 needs almost seven times more time for the calculation than the RegEx-Decoder. A point of criticism might be that we use no independent implementation to compare the time complexity and we may not implemented the beam search algorithm optimally. Figure \ref{fig:digitAutomaton} shows the corresponding extended automaton. Let us count the multiplications: Beam search with beam width 100 calculates for each of the 100 prefixes at each time step 11 new prefixes (one for each digit plus one adding the \nac{}) and thus 1100 multiplications per time step in total in the worst case. The RegEx-Decoder calculates for each of the 10 arcs which read digits 6 new prefixes and for each of the 6 transitions requiring a \nac{} there is only one multiplication. Thus, we have 66 multiplications in total. Therefore, our theoretical analysis rather indicates that the RegEx-Decoder is implemented suboptimally since beam search needs 16 times more multiplications. Although beam search with beam width 100 is much slower it yields significantly more errors (round about 40 errors on average if the ground truth are 4 or 5 digits). To get a comparable performance for the experiments with 4 and 5 digits, we need a beam width of at least 1000\footnote{The running time increases from round about 15 sec to 115 sec for all 10,000 output matrices.}.
\begin{figure}
\tikzstyle{every state}=[inner sep=3pt,minimum size=10pt]
\begin{tikzpicture}[shorten >=1pt,node distance=1.2cm,on grid,auto]
\node[state,initial] (q0) at (0,0) {};
\node[state,below of=q0] (nac1) {};
\node[state,right=2cm of q0] (q1) {};
\node[state,below of=q1] (nac2) {};
\node[state,right=2cm of q1] (q2) {};
\node[state,below of=q2] (nac3) {};
\node[state,right=2cm of q2,accepting] (q3) {};
\node[state,below of=q3,accepting] (nac4) {};
\node[state,right=2cm of q3,accepting] (q4) {};
\node[state,below of=q4,accepting] (nac5) {};
\node[state,right=2cm of q4,accepting] (q5) {};
\node[state,below of=q5,accepting] (nac6) {};

\path[->] (q0) edge node[left=0mm] {$\symbnac$} (nac1);
\path[->] (q1) edge node[left=0mm] {$\symbnac$} (nac2);
\path[->] (q2) edge node[left=0mm] {$\symbnac$} (nac3);
\path[->] (q3) edge node[left=0mm] {$\symbnac$} (nac4);
\path[->] (q4) edge node[left=0mm] {$\symbnac$} (nac5);
\path[->] (q5) edge node[left=0mm] {$\symbnac$} (nac6);

\path[->] (q0) edge node[above=0mm] {\tt [0-9]} (q1);
\path[->] (q1) edge node[above=0mm] {\tt [0-9]} (q2);
\path[->] (q2) edge node[above=0mm] {\tt [0-9]} (q3);
\path[->] (q3) edge node[above=0mm] {\tt [0-9]} (q4);
\path[->] (q4) edge node[above=0mm] {\tt [0-9]} (q5);

\path[->] (nac1) edge node[pos=0.4,sloped,above=0mm] {\tt [0-9]} (q1);
\path[->] (nac2) edge node[pos=0.4,sloped,above=0mm] {\tt [0-9]} (q2);
\path[->] (nac3) edge node[pos=0.4,sloped,above=0mm] {\tt [0-9]} (q3);
\path[->] (nac4) edge node[pos=0.4,sloped,above=0mm] {\tt [0-9]} (q4);
\path[->] (nac5) edge node[pos=0.4,sloped,above=0mm] {\tt [0-9]} (q5);

\end{tikzpicture}
\caption{Extended automaton accepting 3 to 5 digits and optional intermediate \nac{}s. }\label{fig:digitAutomaton}
\end{figure}
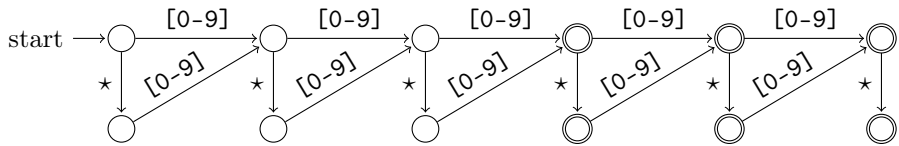

\section{Conclusion}
In this article, we consider regular expressions for the decoding of neural network outputs. Regular expressions are a very efficient way to define a pattern of interest to search in text strings. We suggest to use this pattern for a convenient and clear decoding process. Similar results may also be archived by a smart evaluation of the best path. The advantage of regular expressions over individual evaluation of the output is the simple and unified notation. Furthermore, the proposed algorithm allows a highly adaptable decoding process since only the regular expression has to be changed.

We show how to exploit finite automata to find the most likely feasible label sequence of a regular language.
A further analysis of the decoding procedure yields a speed-up of the algorithm such that it also works fast for complex regular expressions or many network outputs. We propose also an approximation which is shown to be exact under conditions which are commonly satisfied for CTC-trained networks. This theoretical result was confirmed by experiments. As a main result, we showed that the decoder is applicable in practical scenarios. Even if the approximation fails to produce exact results, it is likely that the ground truth does not fit to the regular expression. This results in a low probability decoding result further underestimated by our approximation which should not be harmful in most applications.
Additionally, these experiments show that the proposed method is very efficient compared to state of the art decoding algorithms.

The proposed speed-ups work only for the path probability $\p(\fett \pi|\fett X)$ (instead of the word probability). If the decoder should return the
exact probability, all paths contribute
to the result and, thus, cannot be skipped. Hence, speed-ups seem to be hard.
Additionally, we have to take care about distinct paths through the automaton
accepting the same label sequence. An \emph{Unambiguous FSA} or even a DFA is required to ensure that the automaton accepts every path (of labels) only once. We
already discussed the disadvantages of DFAs in Section \ref{sec:automata}.

There are plenty of applications for the proposed algorithm. The method can be applied e.g. to keyword spotting but also patterns of image retrieval tasks can be described conveniently.
%Given the network output, the regular expression consists of the given keyword(s) and a surrounding filler (space and {\tt .*}). The likelihood can easily be determined via capturing groups.
The proposed decoder is an essential part of our handwriting recognition systems e.g. for HTRtS (full text recognition) and ANWRESH (form reading) competitions.

\section{Acknowledgment}
This research was supported by the research grant no. KF2622304SS3 (Zentrales Innovationsprogramm Mittelstand) of the Federal Ministry for Economic Affairs and Energy (Germany).

The authors would like to thank the anonymous reviewers for their valuable comments improving the manuscript. We also thank U. Siewert for his detailed and profound comments.

\section*{Appendix}\label{sec:proofs}
\begin{proof}[Proof of Theorem \ref{theo:exactAlpha}]
For $t=1$, the claim is correct since the most likely path of
length 1 consists of the most likely character if $q' = q_0$. Otherwise we obtain zero.

Let $t>1$.
To keep things simple, we fix $q''$ to consider only prefixes through $(q'',q')$ and $(q',q)$. Therefore, let $\alpha_{t,q'',q',q}^1$ be the likelihood of the most likely prefix through $q'',q'$ and $q$ and let $\emission_{t,q'',q',q}^1$ the read label at
$t$. Then
\[ \app(t+1,q',q,1) = \max_{q''\in\raum P(q')}\alpha_{t+1,q'',q',q}^1.\]
Analogously, let $\alpha_{t,q'',q',q}^2$ be the
most likely feasible sequence through $q'',q'$ and $q$ not ending on
$\emission_{t,q',q}^1$. Then
\[ \app(t+1,q',q,2) = \max_{q''\in\raum P(q') }\alpha_{t+1,q'',q',q}^2.\] The theorem is
proven if
\begin{align*}
\alpha_{t+1,q'',q',q}^1 &= \max\Big\{\alpha^i_{t,q'',q'}y_{t+1,a}\;|\; i\in\{1,2\}, a\in\Gamma(t+1,i,q'',q',q) \Big\} \\
\alpha^2_{t+1,q'',q',q}&=\max\Big\{\alpha^i_{t,q'',q'}y_{t+1,a}\;|\;
i\in\{1,2\},
a\in\Gamma(t+1,i,q'',q',q)\setminus\{\emission^1_{t+1,q',q}\}\Big\}.
\end{align*}
We make a case distinction, calculate the exact probability and show that
$\alpha^i_{t+1,q'',q',q}$ only depends on $\alpha^i_{t+1,q'',q'}$ for
$i\in\{1,2\}$ and $\likeliest_{t+1,q',q}^j$ for $j\in\{1,2,3\}$. For sake of
simplicity, we omit the index $q',q$ for $\likeliest_{t+1,q',q}^j$ for the
rest of the proof %since we only consider $\likeliest^j$ for
%$q',q$.
Analogously, we omit $q'',q'$ for $\alpha^i_{t,q'',q'}$. Thus,
$\alpha^i_{t}=\alpha^i_{t,q'',q'}$ and
$\likeliest_{t+1}^i=\likeliest_{t+1,q',q}^i$. We check the following cases:
\begin{figure}
\centering
\label{fig:alpha1}%
\begin{tikzpicture}[auto,shorten >=3pt]
\node[state] (q'') at (0,0) {$q''$}; \node[state] (q') at (3,0)
{$q'$}; \node[state] (q) at (6,0) {$q$}; \draw[->] (q'') edge node
(e2) [above]{$\emission_{t,q'',q'}^1$} (q'); \draw[->] (q') edge node
(e1) [above]{$\emission_{t+1,q',q}^1$} (q); \node (a1) [below =of e2]
{$\alpha_{t}^1$}; \node (a2) [below =of a1] {$\alpha_{t}^2$}; \node
(l1) [below =of e1] {$y_{t+1,\likeliest_{t+1}^1}$}; \node (l2) [below
=of l1] {$y_{t+1,\likeliest_{t+1}^2}$};
\draw[->] (a1) edge (l1); \draw[->] (a1) edge (l2); \draw[->] (a2)
edge (l1);
\end{tikzpicture}
~ %add desired spacing between images, e. g. ~, \quad, \qquad, \hfill etc.
\caption{Most likely suffix trough $q'',q'$ and $q$ and possible combinations
to calculate $\alpha^1_{t+1,q'',q',q}$.}
\end{figure}
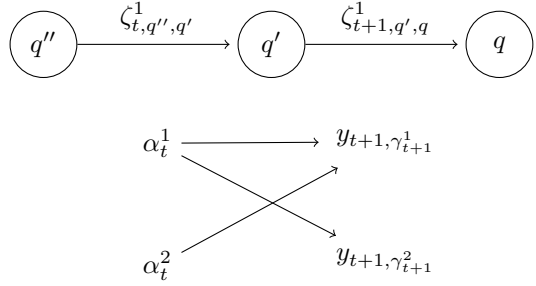

\renewcommand{\theenumi}{\bf \arabic{enumi}}
\renewcommand{\labelenumi}{\theenumi:}
\renewcommand{\theenumii}{\bf \alph{enumii}}
\renewcommand{\labelenumii}{\theenumii:}
\begin{enumerate}
\item \label{case:1} $\emission_{t,q'',q'}^1 \neq \likeliest_{t+1}^1$, i.e. there are no
restrictions by $\mathcal F$. Hence, the most likely path combines the
most likely path through $q'',q'$ with the most likely label
$\likeliest_{t+1}^1$ at arc $(q',q)$:
\begin{align*}
\alpha_{t,q'',q',q}^1&=\alpha_{t}^1y_{t+1,\likeliest_{t+1}^1}
\end{align*}
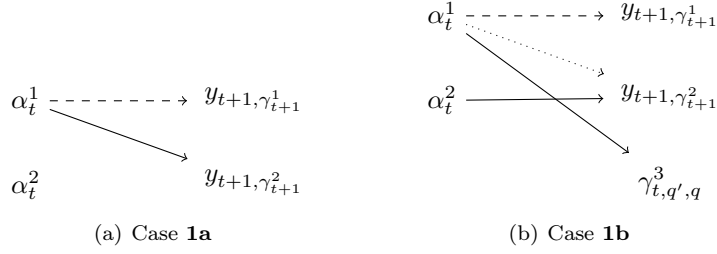
\begin{figure}
{
\hfill
\subfigure[][Case \ref{case:1a}]{\label{fig:1a}
\begin{tikzpicture}[auto,shorten >=3pt,node distance=0.5cm]
\node (a1) at (0,0) {$\alpha_{t}^1$};
\node (a2) [below =of a1] {$\alpha_{t}^2$};
\node (l1) at (3,0) {$y_{t+1,\likeliest_{t+1}^1}$};
\node (l2) [below =of l1] {$y_{t+1,\likeliest_{t+1}^2}$};
\draw[dashed,->] (a1) edge (l1);
\draw[->] (a1) edge (l2);
\end{tikzpicture}}
\hfill
\subfigure[][Case \ref{case:1b}]{\label{fig:1b}
\begin{tikzpicture}[auto,shorten >=3pt,node distance=0.5cm]
\node (a1) at (0,0){$\alpha_{t}^1$};
\node (a2) [below =of a1] {$\alpha_{t}^2$};
\node (l1) at (3,0) {$y_{t+1,\likeliest_{t+1}^1}$};
\node (l2) [below =of l1] {$y_{t+1,\likeliest_{t+1}^2}$};
\node (l3) [below =of l2] {$\likeliest_{t,q',q}^3$};
\draw[dashed,->] (a1) edge (l1);
\draw[dotted,->] (a1) edge (l2);
\draw[->] (a2) edge (l2);
\draw[->] (a1) edge (l3);
\end{tikzpicture}}
\hfill
}
\caption{Subcases of \ref{case:1}. Combination of $\alpha^1_{t+1,q'',q',q}$ dashed, other forbidden paths are dotted. Solid arcs denote possible combinations to calculate $\alpha_{t+1,q'',q',q}^2$.}
\end{figure}
\begin{enumerate}
\item \label{case:1a}
$\emission_{t,q'',q'}^1 \neq \likeliest_{t+1}^2$ (see Figure \ref{fig:1a}).
There are no restrictions such that the second most likely path is %i.e. $\emission_{t+1,q',q}^1=\likeliest^1_{t+1,q',q} \vee \emission_{t+1,q',q}^1=\likeliest^3_{t+1,q',q}$
\begin{align*}
\alpha_{t+1,q'',q',q}^2&=\alpha_{t}^1y_{t+1,\likeliest_{t+1}^2}.
\end{align*}
\item \label{case:1b}
$\emission_{t,q'',q'}^1 = \likeliest_{t+1}^2$ (dotted combination in Figure \ref{fig:1b}). The suffices $\likeliest_{t+1}^1$ and $(\emission_{t,q'',q'}^1,\likeliest_{t+1}^2)$ are not allowed
in
$\Pi(t+1,q',q)$. Thus,
\begin{align*}
\alpha_{t+1,q'',q',q}^2&=\max\left\{\alpha_{t}^1y_{t+1,\likeliest_{t+1}^3}\:
,\:\alpha_{t}^2y_{t+1,\likeliest_{t+1}^2}\right\}.
\end{align*}
\end{enumerate}

\item \label{case:2}
$\emission_{t,q'',q'}^1 =\likeliest_{t+1}^1$. Thus, due to $\mathcal F$, it is not allowed that consecutive arcs read the same label at consecutive positions. The most likely path from $\Pi(t+1,q',q)$ through $q'', q'$ and $q$ combines either the most likely path from $\Pi(t,q'',q')$ with
the second most likely label at position $t+1$ or the second most
likely path from $\Pi(t,q'',q')$ with
the most likely label at position $t+1$:
\begin{align*}
\alpha_{t,q'',q',q}^1&=\max\left\{\alpha_{t}^1y_{t+1,\likeliest_{t+1}^2}\:
,\:\alpha_{t}^2y_{t+1,\likeliest_{t+1}^1}\right\}
\end{align*}

\begin{enumerate}
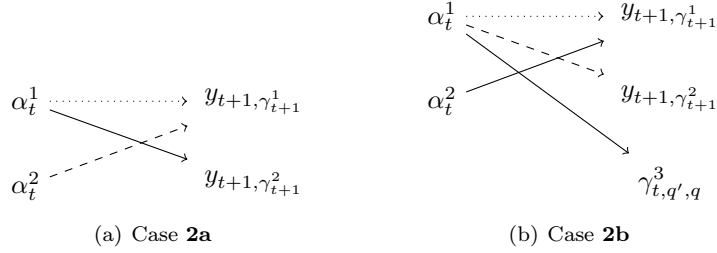
\begin{figure}
{
\hfill
\subfigure[][Case \ref{case:2a}]{
\label{fig:2a}
\begin{tikzpicture}[auto,shorten >=3pt,node distance=0.5cm]
\node (a1) at (0,0) {$\alpha_{t}^1$};
\node (a2) [below =of a1] {$\alpha_{t}^2$};
\node (l1) at (3,0) {$y_{t+1,\likeliest_{t+1}^1}$};
\node (l2) [below =of l1] {$y_{t+1,\likeliest_{t+1}^2}$};
\draw[dotted,->] (a1) edge (l1);
\draw[dashed,->] (a2) edge (l1);
\draw[->] (a1) edge (l2);
\end{tikzpicture}}
\hfill
\subfigure[][Case \ref{case:2b}]{
\label{fig:2b}
\begin{tikzpicture}[auto,shorten >=3pt,node distance=0.5cm]
\node (a1) at (0,0){$\alpha_{t}^1$};
\node (a2) [below =of a1] {$\alpha_{t}^2$};
\node (l1) at (3,0) {$y_{t+1,\likeliest_{t+1}^1}$};
\node (l2) [below =of l1] {$y_{t+1,\likeliest_{t+1}^2}$};
\node (l3) [below =of l2] {$\likeliest_{t,q',q}^3$};
\draw[dashed,->] (a1) edge (l2);
\draw[dotted,->] (a1) edge (l1);
\draw[->] (a2) edge (l1);
\draw[->] (a1) edge (l3);
\end{tikzpicture}}
\hfill
}
\caption{Subcases of \ref{case:2}. Combination of $\alpha^1_{t+1,q'',q',q}$ dashed, other forbidden paths are dotted. Solid arcs denote possible combinations to calculate $\alpha_{t+1,q'',q',q}^2$.}
\end{figure}
\item $\emission_{t+1,q'',q',q}^1 = \likeliest_{t+1}^1 $ (dashed combination in Figure \ref{fig:2a}).\label{case:2a}
The only restriction is not to read $\likeliest_{t+1}^1$ such that the second most likely path is simply %i.e. $\emission_{t+1,q',q}^1=\likeliest^1_{t+1,q',q} \vee \emission_{t+1,q',q}^1=\likeliest^3_{t+1,q',q}$
\begin{align*}
\alpha_{t+1,q'',q',q}^2&=\alpha_{t}^1y_{t+1,\likeliest_{t+1}^2}.
\end{align*}
\item \label{case:2b} $\emission_{t+1,q'',q',q}^1 = \likeliest_{t+1}^2 $ (dashed combination in Figure \ref{fig:2b}). Hence, the
suffices $(\emission_{t,q'',q'}^1,\likeliest_{t+1}^1)$ and
$\likeliest_{t+1}^2$ are forbidden.
\begin{align*}
\alpha_{t+1,q'',q',q}^2&=\max\left\{\alpha_{t}^2y_{t+1,\likeliest_{t+1}^1}\:
,\:\alpha_{t}^1y_{t+1,\likeliest_{t+1}^3}\right\}
\end{align*}
\end{enumerate}
\end{enumerate}

This completes the proof.

\end{proof}

\begin{proof}[Proof of Theorem \ref{theo:approxAlpha}]
Note, that the approximation is exact for arcs reading two or less labels since in this case $\widetilde\cont(t,q',q,i)$ and $\cont(t,q',q,i)$ maximize over the same paths. Especially, \nac{}-transitions are always exact since they read only one label.

Let $\fett \pi^*$ be the most likely feasible path with respect to the regular expression $\fett r$ and let
\begin{align*}
\cont(t,q',q,i) = \prod_{t'=1}^ty_{t',\pi^*_{t'}}
\end{align*}
for $i\in\{1,2\}$, i.e. $\pi^*_t=\pi^*_{t-1}$.
Assume $\pi^*_t\in \albet$.
Due to Assumption \ref{cond:limitedContinues} $\pi^*_{t-2},\pi^*_{t+1}\neq\pi^*_{t}$. %if $\pi^*_t\in \albet$. %We assume that $\alpha^i_{t,q',q}$ where the base case is clear from Theorem \ref{theo:exactAlpha}.

The likelihood of $(\pi^*_1,\dots,\pi^*_{t-1})$ is equal to $\alpha_{t-1,q',q}^j$ for some $j$. $j$ cannot be greater than 2 since otherwise $\pi^*_{t-1}\not \in \{\likeliest_{t-1,q',q}^{1},\likeliest_{t-1,q',q}^{2}\}$ (see Figure \ref{fig:1a}) and the substitution of $\pi^*_{t-1}$ by $\symbnac$ would yield a feasible path with greater likelihood due to condition \ref{cond:likelynac}. This contradicts to the assumption that $\fett \pi^*$ is maximizing the likelihood of all feasible paths. Thus, we only need to compute $\alpha^i_{t,q',q}$ for $i\leq2$.

If $\pi^*_{t}\not \in \{\likeliest_{t,q',q}^{1},\likeliest_{t,q',q}^{2},\likeliest_{t,q',q}^{3}\}$, we get a feasible, more likely path by substituting $\pi^*_{t}$ by $\symbnac$. This new path collapses to the same word. Again, this is a contradiction to the maximum likelihood of $\fett \pi^*$. Thus, we only need to consider the three most likely labels per arc.

\end{proof}

\bibliography{lit}

\end{document}